\pgfplotsset{compat=1.16}
\renewcommand{\Comment}[1]{\textit{\# #1}} 
\newcommand{\bs}[1]{{\bm{#1}}}
\LetLtxMacro{\originaleqref}{\eqref}
\LetLtxMacro{\originalref}{\ref}
\definecolor{colorhkust}{HTML}{142B8C}
\definecolor{colorshanghaitech}{HTML}{A20005}
\definecolor{colortsinghua}{HTML}{743481}
\definecolor{colordark}{RGB}{184,134,11}
\definecolor{colorRed}{RGB}{128, 0, 0}
\definecolor{colorGreen}{RGB}{0, 64, 0}
\definecolor{colorBlue}{RGB}{0, 0, 128}
\renewcommand{\eqref}{\originaleqref}
\newcommand{\sat}{\mathsf{S}}
\newcommand{\bx}{\bs{x}}
\newtheorem{thm}{Theorem}
\newtheorem{lem}{Lemma}
\newtheorem{defn}{Definition}
\newtheorem{ass}{Assumption}
\tikzset{
	->-/.style={decoration={markings, mark=at position 0.5 with {\arrow{stealth}}}, postaction={decorate}}
}
\begin{document}

	\title{Hierarchical Learning and Computing over Space-Ground Integrated Networks}
	\author{Jingyang~Zhu,~\textit{Graduate Student Member, IEEE},~Yuanming~Shi,~\textit{Senior Member, IEEE},\\Yong~Zhou,~\textit{Senior Member, IEEE},~Chunxiao~Jiang,~\textit{Fellow, IEEE},~and~Linling~Kuang,~\textit{Member, IEEE}
	
 \thanks{Jingyang Zhu, Yuanming Shi, and Yong Zhou are with the School of Information Science and Technology, ShanghaiTech University, Shanghai 201210, China (e-mail: \{zhujy2, shiym, zhouyong\}@shanghaitech.edu.cn).
 	\textit{(Corresponding author: Yuanming Shi.)}
 	
Chunxiao Jiang and Linling Kuang are with the Beijing National Research Center for Information Science and Technology, Tsinghua University, Beijing, 100084, China (e-mail: \{jchx,  kll\}@tsinghua.edu.cn).

                
 }}
	
	\maketitle
	\IEEEpeerreviewmaketitle
    \begin{abstract}
	Space-ground integrated networks hold great promise for providing global connectivity, particularly in remote areas where large amounts of valuable data are generated by Internet of Things (IoT) devices, but lacking terrestrial communication infrastructure.
	The massive data is conventionally transferred to the cloud server
	for centralized artificial intelligence (AI) models training, raising huge communication overhead and privacy concerns.
	To address this, we propose a hierarchical learning and computing framework, which leverages the low-latency characteristic of low-earth-orbit (LEO) satellites and the global coverage of geostationary-earth-orbit (GEO) satellites, to provide global aggregation services for locally trained models on ground IoT devices.
	Due to the time-varying nature of satellite network topology and the energy constraints of LEO satellites, efficiently aggregating the received local models from ground devices on LEO satellites is highly challenging.
	By leveraging the predictability of inter-satellite connectivity, modeling the space network as a directed graph, we formulate a network energy minimization problem for model aggregation, which turns out to be a \textit{Directed Steiner Tree (DST)} problem.
	We propose a topology-aware energy-efficient routing (TAEER) algorithm to solve the \textit{DST} problem by finding a minimum spanning arborescence on a substitute directed graph. Extensive simulations under real-world space-ground integrated network settings demonstrate that the proposed TAEER algorithm significantly reduces energy consumption and outperforms benchmarks.
\end{abstract}

\begin{IEEEkeywords}
	Space-ground integrated networks, hierarchical learning, topology-aware energy-efficient routing.
\end{IEEEkeywords}

\section{Introduction}
With the rapid development of data sensing capabilities of smartphones, electric vehicles, and various sensors, a massive amount of generated data can be leveraged to train artificial intelligence (AI) models \cite{EdgeAI6G}. These AI models support various intelligent applications, such as autonomous driving \cite{katare2023ASurvey} and environment monitoring \cite{su2022retrieval}.
Due to the advanced fifth-generation (5G) communication infrastructure in urban areas, Internet of Things (IoT) devices can quickly connect with cloud servers, providing necessary support for data transmission and AI model training. 
However, remote areas such as forests, deserts, and oceans, which have significant demand for intelligent services like disaster monitoring and deep-ocean exploration \cite{li2020enabling}, lack coverage by ground base stations. This makes it challenging for data generated by IoT devices in these regions to be exploited for AI model training. To address this issue, an space-ground integrated network (SGIN), serving as a complement and extension to terrestrial communication networks, leverages space-based communication and computing capabilities to provide wireless data transmission services for intelligent applications in remote areas \cite{liu2018spaceairground,zhu2022delay}.
The SGINs comprise terrestrial IoT devices and the space computing power network (Space-CPN) \cite{shang2021computing}. Space-CPN typically includes unmanned aerial vehicles (UAVs), high-altitude platforms (HAPs), and various types of satellites. 

To accelerate the commercial deployment of space-ground communications, standards starting from 3GPP Release 17 specifications provide more support for non-terrestrial networks (NTN) \cite{3gpp_ntn_rel17}.
Meanwhile, owing to the large-scale deployment of space-ground communication hardware in IoT devices, particularly the advancements in satellite communication hardware, direct connections between IoT devices and satellites have been successfully validated \cite{le2024asurvey}.
Traditional centralized learning requires transmitting data generated by widely distributed IoT devices to ground cloud servers for centralized processing via the SGIN, leading to substantial communication overhead and severe privacy concerns.
To this end, a hierarchical learning and computing system, where remote IoT devices locally train AI models using their collected data, while the Space-CPN directly provides model aggregation and dissemination services for these IoT devices, can significantly enhance data transmission and learning efficiency \cite{Al2024federated,chen2023time}. 
This paradigm can also be known as multi-tier federated learning (FL) \cite{zhang2022scalable}.
{In the context of existing research on learning and computing in SGINs, FL in SGINs can be generally classified into two distinct categories. The first involves terrestrial devices performing local model training while satellite-based systems handle model aggregation (e.g., \cite{fang2023olive,han2024cooperative,wang2023hierarchical}), whereas the second employs space-based nodes for local model training with subsequent aggregation conducted at ground stations (e.g., \cite{lin2023fedsn,yang2024communicationefficient}). 
Moreover, recent literature also studied edge computing techniques in SGINs, particularly focusing on critical aspects including service caching mechanisms \cite{hu2025joint}, task offloading strategies \cite{zhou2025latencyenergy}, and on-orbit resource allocation schemes \cite{sun2024distributionally}.
}

Satellites in Space-CPN can generally be classified into three categories based on their orbital altitudes, i.e., geostationary-earth-orbit (GEO) satellites, medium-earth-orbit (MEO) satellites, and low-earth-orbit (LEO) satellites \cite{jiang2020reinforcement}.
To support the hierarchical learning and computing system, LEO mega-constellations, which are in active deployment nowadays (e.g., Starlink, Kuiper, Telesat, etc.) and are characterized by shorter communication delays and higher signal strengths, are highly suitable for effectively gathering local models from these remote IoT devices \cite{chaudhry2021laser}. 
GEO satellites can efficiently manage and optimize communication paths between LEO satellites and are suitable for aggregating local models across the network and broadcasting the updated global model, due to their synchronous characteristics.
Therefore, this paper considers a three-layer SGIN encompassing GEO satellites, LEO mega-constellations, and terrestrial edge devices for the implementation of learning and computing procedures,
fully leveraging the computational capabilities of on-orbit processing and the advantages of inter-satellite communications \cite{george2018onboard,ouyang2023joint}.
In terrestrial networks, the scheduling of communication schemes for distributed model training, such as data, model, and pipeline parallelism in AI center networks, is critically important \cite{deng2024cloud}. While these methods may not significantly affect the final performance of model training, they can greatly accelerate the overall training process. For instance, communication patterns between servers and nodes, such as gather, scatter, and all-reduce, are specifically designed for different types of computational tasks. Therefore, designing a communication scheme tailored to AI traffic for a three-layer SGIN, specifically a routing method for model aggregation in FL procedures, is highly meaningful.

In this three-layer SGIN, the energy consumption issue of the LEO satellites is particularly prominent.
The power system of a satellite comprised of solar panels and battery cells, which generate electricity from sunlight and store the generated power, respectively.
Due to the limitations imposed by the launch efficiency of rockets, LEO satellites must compromise in terms of size, weight, and power, with their capability of energy generation being particularly restricted \cite{wang2024dynamicLISL}. 
Meanwhile, GEO satellites, during their orbital flight around the Earth, are predominantly exposed to sunlight, whereas LEO satellites frequently transition between daylight and nighttime conditions. This cyclical exposure poses significant challenges for power management in LEO satellites, since abuse of energy results in the rapid depletion of the satellite's operational lifespan \cite{yang2016towards}.
{Specifically, the batteries of the satellites have inherent limitations in terms of recharge/discharge cycles, commonly known as the depth of discharge (DOD) cycle. These limitations restrict not only the useful life of the batteries but also that of the satellites \cite{liu2024green,jing2023energy,macambira2022energy}.
Therefore, to aggregate the local models of ground IoT devices within LEO satellite networks through inter-satellite routing, in face of pronounced energy consumption concerns, it is imperative that the routing algorithms within LEO mega-constellations be designed to optimize for high energy efficiency and low power consumption \cite{yang2016towards,alagoz2011energy}.}

{Compared to conventional terrestrial routing algorithms, routing in LEO satellite networks faces three major challenges: highly dynamic topologies, constrained on-board resources, and complex inter-satellite link (ISL) conditions. Existing studies have primarily addressed the above challenges in scenarios focused on providing relay connections for two distant terrestrial communication terminals \cite{han2022time}.}
This can be modeled as an end-to-end shortest distance path problem, aiming to reduce the number of hops in the LEO satellite networks or to minimize the transmission delay or overhead \cite{werner1997dynamic,ekici2001distributed,chen2024shortest}. 
Various methods can be applied to address these problems, including the Dijkstra algorithm, Bellman-Ford algorithm, and Floyd-Warshall algorithm \cite{gallo1988shortest}. 
These algorithms fundamentally model the satellite network as a graph and compute the shortest paths through iterative methods.
However, in view of the hierarchical learning and computing architecture, LEO satellites that collect local models from ground devices are discretely distributed across the entire LEO mega-constellation. These LEO satellites can be regarded as \textit{terminals}.
On a large time scale, due to the Earth's rotation and the mobility of LEO satellites, the identity of satellite \textit{terminals} are different across communication rounds in the hierarchical learning procedures. 
On a smaller time scale, although intra-orbit communication topology can be considered stable \cite{zeng2024satellite}, the inter-orbit communication topology changes rapidly \cite{zhai2023fedleo}.
This raises a critical challenge of efficiently aggregating these local models from \textit{terminals} through inter-satellite routing with ISLs in a highly dynamic network topology.


In this paper, we consider a hierarchical learning and computing framework over a three-layer SGIN encompassing GEO satellites, LEO mega-constellations, and terrestrial edge devices for the implementation of learning and computing procedures.
The aim of this paper is to design an energy-efficient global model aggregation routing algorithm for the hierarchical learning and computing framework based on the satellite network topology.

The major contributions are summarized as follows:
\begin{itemize}
	\item[1)] We propose a novel hierarchical learning and computing framework, tailed to SGINs, to support collaboratively training on widely distributed terrestrial IoT devices with LEO mega-constellations and GEO satellites.
	In this SGIN, LEO satellites collect local models from terrestrial IoT devices and aggregate them through ISLs. GEO satellites is responsible for managing the routing paths and broadcasting the updated global model to these IoT devices.
	\item[2)] We leverage the predictability of satellite network topology by dividing it into snapshots. We propose to model the space topology as a directed graph based on the inter-satellite connectivity in snapshots, where the weights of directed edges are set to be the energy consumption for model transmission. We reveal that minimizing the overall network transmission energy is equivalent to finding a minimum directed sub-branching covering all the terminals with the lowest energy consumption for a given topology. This is a \textit{Directed Steiner Tree (DST)} problem, which is NP-hard and more challenging in the satellite network topology.
	\item[3)] To solve the problem, we propose a topology-aware energy-efficient routing (TAEER) algorithm by first using Dijkstra algorithm to obtain the minimum cost path to the root node and then finding a minimum spanning arborescence for a substitute directed graph based on the space topology and the minimum cost paths.
	\item[4)] We extend our formulated problem and solution to a more general case where inpredictability such as transmission outage happens. By characterizing the outage probability caused by pointing error, we modify the proposed TAEER solution for feasibly applied in the robust scenarios.
\end{itemize}
In realistic SGIN settings, extensive simulations on both Walker-Star and Walker-Delta constellations are conducted. The
results demonstrate that our proposed TAEER scheme can achieve a better performance than other benchmarks, which verifies the superiority of the proposed algorithm.
%
%

The rest of this paper is organized as follows. We start by introducing the overall system model of in Section \ref{sec: system model}. 
Next, we formulate a TAEER problem for global aggregation and provide a solution to this problem in Section \ref{sec: sol1}. 
Subsequently, we provide the simulation results based on professional platforms in Section \ref{sec: simulations}. 
At last, Section \ref{sec: conclusion} summarizes the conclusions and the future research directions of this paper.



\section{System Model}\label{sec: system model}
In this section, we first present the network architecture of the SGINs, as shown in Fig. \ref{fig: systemmodel}.
We then provide detailed information on the hierarchical learning and computing framework and the communication model.

\subsection{Network Architecture}\label{sec: network}
\subsubsection{Ground Networks}
Consider a set $\mathcal{K} = \{1,\cdots,K\}$ of $K$ clusters located in different remote geographic regions, where the ground base stations and other terrestrial communication infrastructures are not available.
In addition, a set $\mathcal{J} = \{1,\cdots,J\}$ of $J$ IoT devices are assumed to be distributed across these $K$ clusters, with each IoT device being located within a single cluster. We denote $\mathcal{K}_{j}$ as the set of IoT devices in cluster $j\in\mathcal{K}$, where $\cup_{j\in\mathcal{K}}\mathcal{K}_j=\mathcal{K}$. Each IoT device $i$ in cluster $j$, i.e., $i\in\mathcal{K}_{j}$, possesses its local dataset $\mathcal{D}_{i,j}= \{\bm{\theta}_{i,j}^l,y_{i,j}^l\}_{i=1}^{|\mathcal{D}_{i,j}|}$. Let $\mathcal{D}_{j} =\cup_{i\in\mathcal{K}_j}\mathcal{D}_{i,j}$ denote the whole dataset in cluster $j$, and $\mathcal{D} = \cup_{j\in\mathcal{J}}\mathcal{D}_{j}$ as the overall dataset in the hierarchical learning system.
By implementing hierarchical learning procedures, each IoT device participates in model training without sharing their local private data, utilizing their local computation capabilities.
Specific application scenarios include large-scale geospatial applications such as environmental monitoring, ocean exploration, and smart agriculture.

\subsubsection{Space Networks}
To establish a hierarchical learning and computing framework covering all IoT devices in the remote clusters, 
we consider a LEO mega-constellation with a set of $\mathcal{N} = \{1,2,\dots,N\}$ orbit planes and three GEO satellites.
The function of space networks is to provide model aggregation and model broadcasting services for ground IoT devices through inter-satellite routing.
In orbit $n\in\mathcal{N}$, there are $S_n$ LEO satellites denoted as $ \mathcal{S}_n = \left\{\sat_{n,1}, \sat_{n,2}, \dots, \sat_{n,S_n}\right\} $ traveling around the Earth in the same direction.
The total number of LEO satellites is thus given by $S=\sum_{n=1}^{N}S_n$.
To leverage the low latency and wide coverage characteristics of LEO satellite networks to provide access services for IoT devices lacking ground communication infrastructure, LEO satellites are tasked with servicing IoT devices within their respective clusters by enabling hierarchical learning \cite{Le2024survey}.
Additionally, these networks are responsible for the fusion of information across different clusters. 
Each terrestrial cluster can be supported by the LEO satellites from different orbits, or possibly the same orbit depending on their locations.
However, due to the high mobility of LEO satellites, the LEO satellite serving a specific terrestrial cluster varies over time. 
Consequently, the concept of \textit{logical location} is introduced to define the LEO satellite that serves a ground cluster at any point in time \cite{ekici2001distributed}. 
It is assumed that the entire Earth surface is covered by \textit{logic locations}, which keep static and are taken over by the nearest satellite. 
This implies that once the coverage area of a given satellite no longer encompasses a particular terrestrial cluster, its service capabilities are handed over to the next satellite within the same orbit. 
By considering the service hand-off, the impact of satellite mobility on providing services for ground devices can be mitigated.
\begin{figure}[t]
	\centering
	\includegraphics[width=1\linewidth]{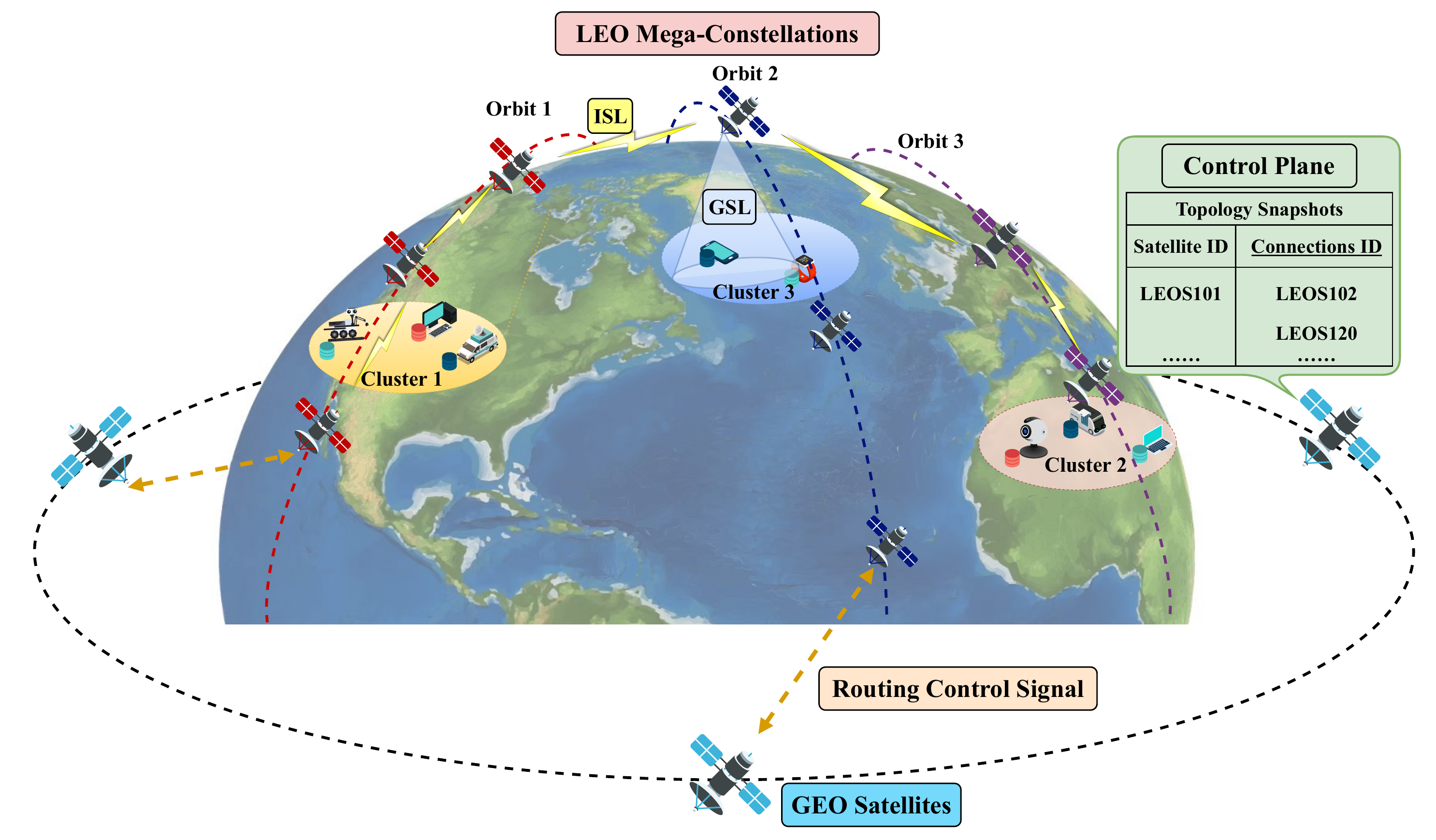}
	\caption[]{System model of a SGIN.}
	\label{fig: systemmodel}
	\vspace{-0.5cm}
\end{figure}
Three GEO satellites $\mathcal{G} = \{\mathcal{G}_1,\mathcal{G}_2,\mathcal{G}_3\}$, whose coverage encompasses the entire Earth surface, serve as the control plane, overseeing communication routing decisions among all LEO satellites and broadcasting information to all terrestrial IoT devices.
Due to the global coverage of three GEO satellites, each LEO satellite can report its current status to them, including energy consumption, ISLs conditions, and other relevant conditions.
In view of this, the routing strategy within the LEO mega-constellations can be directly managed by the control plane on the GEO satellites.

\subsection{Hierarchical Learning and Computing Framework}
In this subsection, we elaborate the hierarchical learning and computation framework over the SGINs.
First of all, the global objective is to minimize the global loss function defined as
\begin{equation}
	\min_{\bx\in\mathbb{R}^{d}}  f(\bx) := \sum_{n=1}^{J} \lambda_{n}f_{n}(\bx),
\end{equation}
where $\lambda_n\geq0$ is the aggregation weight for IoT device $n$, $f_{n}(\cdot)$ is the local loss function of IoT device $n$, and $\bx$ is the model parameter of dimension $d$. Based on the system model described in Section \ref{sec: network}, the global objective can be rewritten as
\begin{equation}
	f(\bx)= \sum_{j=1}^{K}\sum_{i=1}^{|\mathcal{K}_j|} \lambda_{i,j}f_{i,j}\left(\bx\right),
\end{equation}
where $\lambda_{i,j} = \frac{|\mathcal{D}_{i,j}|}{|\mathcal{D}|}$ is the sample size ratio and $f_{i,j}\left(\cdot\right)$ denotes the local loss function of IoT device $i$ in cluster $j$. The local loss function is defined by the learning task and can be written as
\begin{equation}
	f_{i,j}(\bx)=\frac{1}{|\mathcal{D}_{i,j}|} \sum_{l\in\mathcal{D}_{i,j}}\ell\left(\bx ; \boldsymbol{\theta}_{i,j}^{l}, y_{i,j}^{l}\right),
\end{equation}
where $ \ell(\bx;\bm{\theta}_{i,j}, y_{i,j}) $ represents the sample-wise loss function.

In communication round $t$, the hierarchical learning and computing framework, as shown in Fig. \ref{fig: hierlc}, includes two main procedures, i.e., on-device local computing and global aggregation. We assume that the learning procedures operate for $T$ communication rounds.
\subsubsection{On-Device Local Training}
At the beginning of the whole training process, i.e., $t=0$, one leading GEO satellite initializes the global model $\bm{x}^0$ and broadcasts it to all IoT devices. For $t=1,\dots,T-1$, the GEO broadcasts the updated global model $\bm{x}^{t}$ to all IoT devices.
For each IoT device $ i\in\mathcal{K}_{j} $, it first initializes the local model $\bm{x}_{i,j}^{t,0}$ as $\bm{x}_{i,j}^{t,0}:=\bm{x}^{t}$,
Then, the on-board local training process is given by performing $E$ rounds of local mini-batch SGD
\begin{align}
	\bm{x}_{i,j}^{t,e+1} = \bm{x}_{i,j}^{t,e} - \eta \tilde{\nabla} f_{i,j}(\bm{x}_{i,j}^{t,e}),~e = 0,\dots,E-1,
\end{align}
where $\tilde{\nabla}f_{i,j}(\cdot)$ denotes the local mini-batch stochastic gradient.
Consequently, IoT device $i\in\mathcal{K}_{j}$ obtains the local model difference $\Delta\bm{x}_{i,j}^t$ as
\begin{align}
	\Delta\bm{x}_{i,j}^t:=\bm{x}_{i,j}^{t,E}-\bm{x}_{i,j}^{t,0} = -\eta \sum_{e=0}^{E-1}\tilde{\nabla} f_{i,j}(\bm{x}_{i,j}^{t,e}),
\end{align}
and reports the weighted local model difference $\lambda_{i,j}\Delta\bm{x}_{i,j}^t$ to the LEO satellite.
\begin{figure}[t]
	\centering
	\includegraphics[width=1\linewidth]{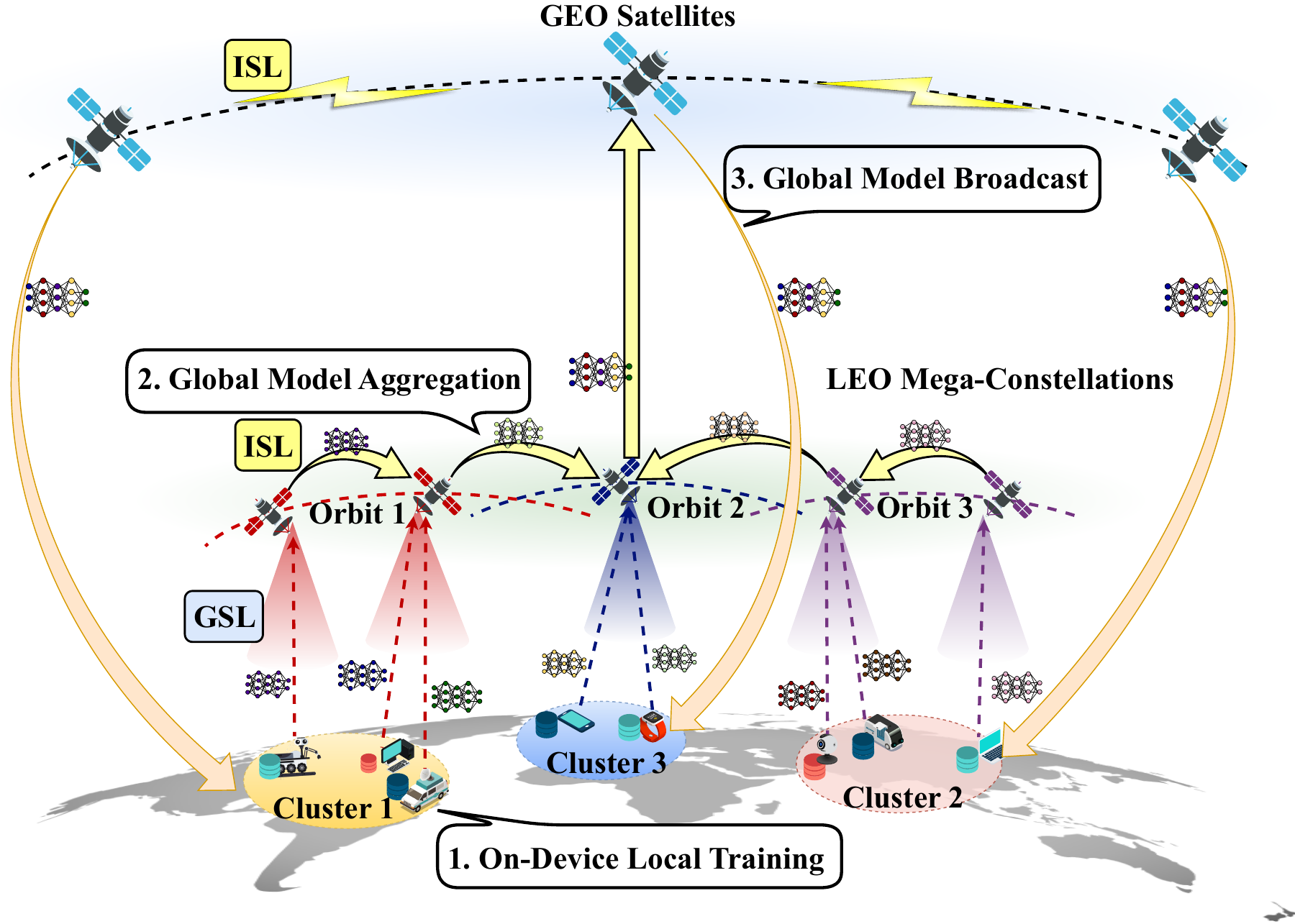}
	\caption[]{Hierarchical learning and computing framework.}
	\label{fig: hierlc}
	\vspace{-0.5cm}
\end{figure}

\subsubsection{Global Aggregation}
Upon the completion of the on-device local training phase on IoT devices, the LEO satellites that receive the local models from the ground IoT devices are distributed throughout the mega-constellations. 
This paper primarily focuses on the effective design of model aggregation methods that enable GEO satellites to acquire the updated global model.
The update of the global model in communication round $t$ can be represented as
\begin{align}
	\bm{x}^{t+1}:=\bm{x}^{t}+\sum_{j=1}^{K}\sum_{i=1}^{|\mathcal{K}_j|} \lambda_{i,j}\Delta\bm{x}_{i,j}^t.
\end{align}

Global aggregation is a typical procedure in model training that integrates communication and computation. Designing an efficient model aggregation routing method for the hierarchical learning and computing system proposed in this paper is the key to improving the efficiency of model training.

\subsection{Communication Model}
In each communication round, the required transmission involves terrestrial IoT devices uploading local models to LEO satellites and inter-satellite communications. Therefore, in this subsection, we elaborate the ground-to-satellite channel model and the inter-satellite communication model.
\subsubsection{Ground-to-Space Link (GSL)}
Upon completing the training on each IoT device, it is imperative to transmit the local model to the connected LEO satellite via GSL.
The small-scale fading between the IoT devices and the LEO satellite follows a shadowed-Rician (SR) fading model, which incorporates both the line of sight (LOS) and scatter components \cite{abdi2003new}. The probability density function (PDF) of the channel gain $|h|^2$, attributable to fading on the GSL, is delineated as follows:
\begin{equation}
	\begin{aligned}
		f_{|h|^2}(x)= & \left(\frac{2 m b_0}{2 m b_0+\Omega}\right)^m \frac{1}{2 b_0} \exp \left(-\frac{x}{2 b_0}\right) \\
		& \times_1 F_1\left(m, 1, \frac{\Omega x}{2 b_0\left(2 b_0 m+\Omega\right)}\right)
	\end{aligned}
\end{equation}
where ${}_1F_1(\cdot, \cdot, \cdot)$ denotes the hypergeometric function. The parameters $m$, $b_0$, and $\Omega$ signify the Nakagami fading coefficient, half of the scattered component's average power, and the average power of the LOS component, respectively \cite{jia2021uplink}. 		
The physical meanings of the hypergeometric function lies in the fact that when the LOS component is dominant, the contribution of the hypergeometric function becomes significant, capturing the enhancement effect of the LOS signal. Conversely, when the LOS component is weak, the impact of the hypergeometric function diminishes, reflecting the signal characteristics dominated by multipath scattering.
This expression can be closely approximated as a Gamma random variable by
\begin{equation}
	f_{|h|^2}(x) \approx \frac{1}{\beta^\alpha \Gamma(\alpha)} x^{\alpha-1} \exp \left(-\frac{x}{\beta}\right),
\end{equation}
where $\Gamma(\alpha)$ represents the Gamma function with $\alpha$ being the shape parameter and $\beta$ being the scale parameter, determined by
\begin{align}
	\Gamma(\alpha) &= \int_0^{\infty} t^\alpha \exp (-t) \mathrm{d} t, \\
	\alpha &= \frac{m(2 b_0+\Omega)^2}{4 m b_0^2+4 m b_0 \Omega+\Omega^2}, \\
	\beta &= \frac{4 m b_0^2+4 m b_0 \Omega+\Omega^2}{m(2 b_0+\Omega)}.
\end{align}
Besides the small-scale fading, the Free Space Loss due to the distance between the satellite and ground devices, can be modeled as $L=[\lambda /(4 \pi l_d)]^2$, with transmission distance $l_d$ and wavelength $\lambda$. Consequently, the channel coefficient of the communication link between IoT devices and the LEO satellite is $|H|^2 = L|h|^2$. 
We assume an uplink method based on frequency-division multiple access (FDMA) to avoid intra-cell interference and ignore the inter-cell interference signals at the receiver for simplification.
\subsubsection{Inter-Satellite Link}
In the global aggregation phase, the transmission and relay through optical ISLs between LEO and GEO satellites is pivotal. These links, operating in the vacuum of space as the propagation medium, facilitate communication between satellites stationed in space. According to \cite{liang2022link}, the received power for an optical ISL is given by
\begin{equation}\label{eq: receive power}
	P_R = P_T \eta_S G_T G_R L_{PL} L_{PS},
\end{equation}
where $P_R$ represents the received power in Watts, $P_T$ denotes the transmitted power in Watts, $\eta_S$ is the optical efficiency of the transceiver system. $G_T$ and $G_R$ represent the transmitter and receiver gains, while $L_{PL}$ is the pointing loss caused by beam misalignment. $L_{PS}$ is the free-space path loss for the optical link between satellites \cite{arnon2005performance}. The expression for the transmitter gain, $G_T$, is given by
\begin{equation}
	G_T = \frac{16}{\Theta_T^2},
\end{equation}
where $\Theta_T$ is the full transmitting divergence angle in radians \cite{polishuk2004optimization}. The receiver gain, $G_R$, is given by
\begin{equation}
	G_R = \left(\frac{D_R \pi}{\lambda}\right)^2,
\end{equation}
with $D_R$ as the diameter of the receiver’s telescope in millimeters \cite{polishuk2004optimization}. The pointing loss, denoted as $L_{PL}$, is
\begin{equation}
	L_{PL} = \exp\left(-G_0 \theta_0^2\right),
\end{equation}
where $\theta_0$ is the pointing error in radians and $G_0 =4\ln2/\theta_{\text{3dB}}^2$ is the coefficient related to the 3-dB beamwidths \cite{nie2021channel}.
Lastly, the free-space path loss, denoted as $L_{PS}$, is calculated as
\begin{equation}
	L_{PS} = \left(\frac{\lambda}{4\pi d_{SS}}\right)^2,
\end{equation}
where $\lambda$ signifies the operating wavelength in nanometers, and $d_{SS}$ denotes the inter-satellite distance in kilometers \cite{arnon2005performance}.

This paper focuses more on the scheduling of inter-satellite transmission links based on the space topology for global model aggregation in the following section.

\section{Topology-Aware Energy-Efficient Routing for Global Model Aggregation}\label{sec: sol1}
LEO satellites face significant energy management challenges due to their limited power generation capabilities and frequent mobility between daylight and eclipse regions \cite{yang2016towards}.
Therefore, routing algorithms in LEO mega-constellations must prioritize high energy efficiency and low power consumption \cite{alagoz2011energy}.
In this section, we study the topology-aware energy-efficient routing scheme in the global aggregation phase based on the space topology. 
\subsection{Characterization of Space Topology}
Space topology describes the connectivity among the LEO satellites and that between LEO and GEO satellites. 
The key idea is that the LEO satellite constellation's orbital period is divided into a series of time slots, during which the topology can be considered fixed \cite{werner1997dynamic,liu2013routing}. A snapshot is taken from a fixed topology to be observed by the GEO for manipulating the routing scheme. 
Given the predictability of space topology, all snapshots can be calculated and stored in advance \cite{han2022time}.

We assume that the complete topology cycle is periodic with period $P$, i.e., repeating the $M$ topology snapshots periodically, where $M$ is the number of topology snapshots within a constellation period. In this case, we have the following definition.

\begin{defn}[Topology Snapshot]
	A topology snapshot $p = m\Delta p,~m\in\{0,1,\cdots,M-1\}$ with $\Delta p= P/M$ can be modeled as a directed graph $\bm{G}(m) = (\mathcal{V},\mathcal{E}(m))$, where $\mathcal{V} = \cup_{n\in\mathcal{N}}\mathcal{S}_n\cup\mathcal{G}$ is the set of all satellites as nodes in the graph, and $\mathcal{E}(m)$ is the set of directed edges $(i,j)_{m} \neq (j,i)_{m}$ between node $i$ and $j$ in $\mathcal{V}$ for snapshot $m$.
\end{defn}
\begin{figure}
	\centering
	\includegraphics[width=1\linewidth]{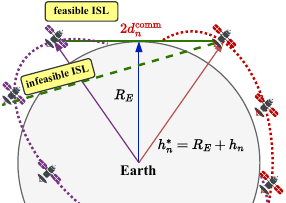}
	\caption{Distance constraint for inter-orbit laser ISLs.}
	\label{fig: ISL}
	\vspace{-0.5cm}
\end{figure}

In addition, the predictability of satellite network topology is primarily based on the following conditions.
First, it is assumed that the operational velocity and relative distance between consecutive satellites within the same orbit remain constant. 
Consequently, the establishment of intra-orbit laser ISLs between two adjacent LEO satellites in one orbit is always feasible.
Second, to establish inter-orbit laser ISLs between satellites on different orbits, visibility considerations are imperative. We assume that at a given moment, the right ascension of ascending node of the satellite $k$ on orbit $n$, the longitude angle, and the orbital inclination are denoted as $\alpha_{n,k}$, $\gamma_{n}$, and $\kappa_{n}$, respectively. The position of the satellite in geocentric coordinates $(x_{n,k},y_{n,k},z_{n,k})$ can be represented as:
\begin{align*}
		\left\{\begin{aligned}
			x_{n,k}&=h_n^*\cos \alpha_{n,k} \cos\gamma_{n}-h_n^*\sin\alpha_{n,k} \cos \kappa_{n} \sin \gamma_{n}, \\
			y_{n,k}&=h_n^* \cos\alpha_{n,k}\sin \gamma_{n} + h_n^*\sin\alpha_{n,k} \cos \kappa_{n} \sin \gamma_{n},\\
			z_{n,k}&=h_n^*\sin\alpha_{n,k} \cos \kappa_{n},
		\end{aligned}\right.
\end{align*}
where $h_n^* = R_E + h_n$  denotes the distance from satellite $\sat_{n,k}$ to the Earth center, $R_E$ is the Earth radius, and $h_n$ is the altitude of orbit $n$.
Then the distance between two satellites in different orbits with coordinates $(x_i,y_i,z_i)$ and $(x_j,y_j,z_j)$ can be obtained by
\begin{align*}
	d_{i,j} = \sqrt{(x_i-x_j)^2+(x_i-x_j)^2+(x_i-x_j)^2}.
\end{align*}
Moreover, the communication radius of a certain satellite in orbit $n$ can be given by
\begin{align}
	d_n^{\text{comm}} = 2\sqrt{(h_n^*)^2-R_E^2}.
\end{align}
To conclude, the inter-orbit laser ISLs can be established if and only if the following condition is met:
\begin{align}
	d_{i,j}\leq\min\{d_{n_i}^{\text{comm}},d_{n_j}^{\text{comm}}\}.
\end{align}
This indicates that the distance between two satellites in different orbits should not exceed the minimum communication range of the satellites if they want to establish connection with each other, as illustrated in Fig. \ref{fig: ISL}.
The network topology constraints are defined as follows. While distance constraints may be satisfied, two satellites within the same orbit cannot establish a link if there is an intermediate satellite between them, i.e., $(\sat_{n,i},\sat_{n,j})\in\mathcal{E}(m),|i-j|= 1,|i-j|=S_n-1$. For satellites in different orbits, when multiple satellites in another orbit satisfy the distance constraint, only the link with the nearest satellite is considered.
In a nutshell, the GEO satellites maintain awareness of the edges that can be established by ISLs in each snapshot. This forms a snapshot table, which can be accessed and reviewed by the GEO satellites at any time.

Moreover, we have the following definition for a time slot.
\begin{defn}[Time Slot]
	A time slot is a time interval $[m\Delta p, (m+1)\Delta p),~m\in\{0,1,\cdots,M-1\}$, where a topology snapshot remains fixed. The topology snapshots in different time slots vary. 
\end{defn}
The duration of a time slot in LEO networks can be assumed to be ranging from approximately 30 to 400 seconds \cite{wang2007topological}.

\subsection{Problem Formulation}
In this subsection, we formulate a topology-aware energy-efficient routing problem in the space networks for global aggregation.
Given the topology snapshot of a specific time slot, 
the GEO satellites can query the communication links that can be established within the space networks, i.e., the edges of graph $\mathcal{E}(m)$. Based on the current communication system configurations of LEO satellites, the weight of each edge in each snapshot, $w_{i,j}(m),~i,j \in \mathcal{V}$, can be set as the energy consumption of the ISL link with a specific direction.
Based on the description of ISL in Section \ref{sec: system model}, the achievable rate of the ISL between satellites $i$ and $j$ ($i$ is the transmitter and $j$ is the receiver) can be given as
\begin{align}
	\gamma_{i,j}(m)  = B_{i,j}(m) \text{log}_2 \left(1 + \frac{P_R^{j}(m)}{\sigma^2} \right),
\end{align}
where $B_{i,j}(m)$ is the bandwidth between nodes $i$ and $j$ in snapshot $m$, and $P_R^{j}(m)$ is the received power of node $j$ in snapshot $m$.
According to \cite{nie2021channel}, the bandwidth of the space optical systems is usually $0.02f_c$, where $f_c$ is the carrier frequency (e.g., 193 THz).
The noise power can be derived by
\begin{align}
	\sigma^2 = k_bB(T_s + T_0 + T_{\text{CMB}}),
\end{align}
where $k_b$ is the Boltzmann constant in J/K, $T_s$ is the solar brightness temperature in K, $T_0$ is the system noise temperature in K, and $T_{\text{CMB}}$ is the Cosmic Microwave Background (CMB) temperature in K \cite{nie2021channel}.
The energy consumption is thus given by
\begin{align}
	w_{i,j}(m)  =  \frac{sP_T^{i}(m)}{\gamma_{i,j}(m)},
\end{align}
where $s$ denotes the total number of bits to be transmitted and $P_T^{i}(m)$ is the transmit power of node $i$ in snapshot $m$.
\begin{figure}
	\centering
	\includegraphics[width=1\linewidth]{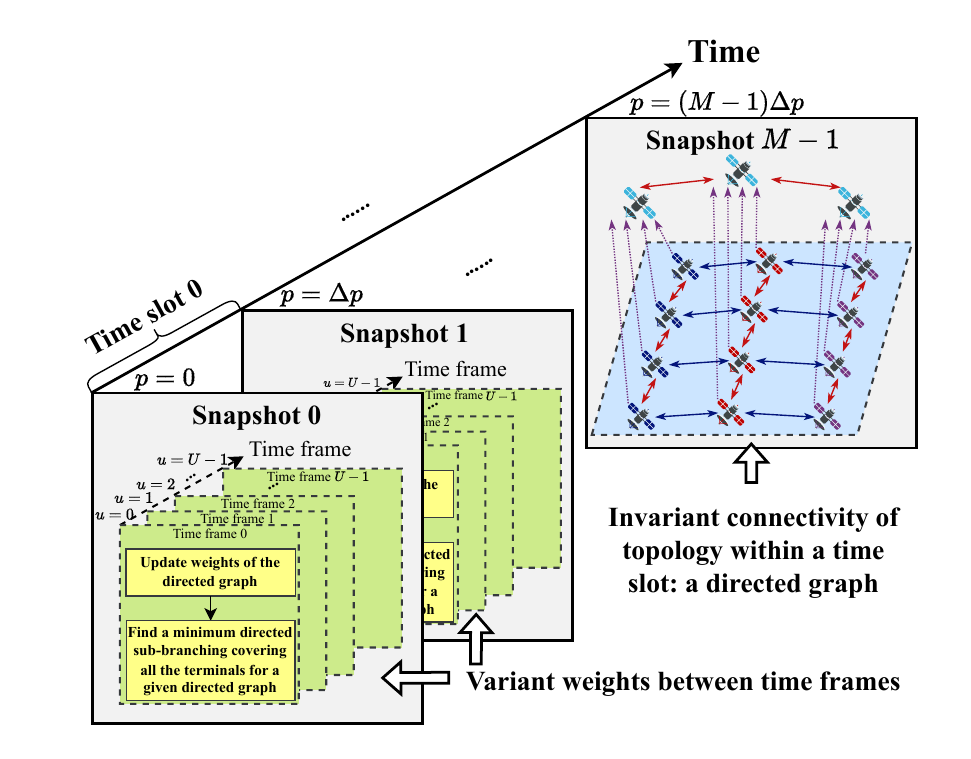}
	\caption{The space topology characterization includes both LEO and GEO satellites.}
	\label{fig: timeline}
	\vspace{-0.5cm}
\end{figure}

In the global aggregation phase, a portion of LEO satellites (i.e., \textit{terminals}), denoted as the subset of nodes $\mathcal{V}'\subset\mathcal{V}$, obtain the local models from the terrestrial clusters. 
To effectively aggregate these distributed local models over the LEO mega-constellations, we assume that the aggregation process can be completed within a certain time slot. To proceed, we define the following terms.
\begin{defn}[Directed Sub-Branching] 
	A directed sub-branching of a directed graph $\bm{G}(m) = (\mathcal{V},\mathcal{E}(m))$ is a subgraph $\mathcal{B}(m)=(\mathcal{V}',\mathcal{E}'(m))$ of the directed graph $\bm{G}(m)$ such that
	\begin{itemize}
		\item The subgraph $\mathcal{B}$ contains a set of terminals $\mathcal{V}'\subset\mathcal{V}$.
		\item The root node is not the source node of any edge in $\mathcal{E}'(m)$.
		\item For any vertex $i\in\mathcal{V}'$ other than the root node, there exists exactly one vertex $j\in\mathcal{V}'$ such that vertex $i$ is the terminal of edge $(j,i)$.
		\item The value of the directed sub-branching is given by the summation of the weights.
	\end{itemize}
\end{defn}

Based on the definition of the directed sub-branching, LEO satellites that receive the local model from terrestrial clusters can be regarded as the terminals of the directed sub-branching.
A root node denoted by $r$ can then be chosen from the terminals $\mathcal{V}'$ for the directed sub-branching.
In this case, to minimize the overall transmission energy among the LEO satellites and the GEO satellites in snapshot $m$ while aggregating all the local models, the optimization problem is given by
\begin{align}\label{eq: problem}
	\mathscr{P}_1:~\operatorname{minimize} ~& W(\mathcal{B}(m))=\sum_{i,j\in\mathcal{V}'} w_{i,j}(m),
\end{align}
where $w_{i,j}(m)$ can be regarded as the weight of edge $(i,j)$ in $\mathcal{E}'(m)$.
The objective is to find a minimum directed sub-branching covering all the terminals with the lowest energy consumption. The constraint is that the edges of the sub-branching must be part of the directed graph $\bm{G}(m)$ constructed by the current network topology. Essentially, it involves identifying an inter-satellite routing path at minimum energy cost that takes the form of a directed sub-branching.

In each time slot, the connectivity of the topology snapshot remains unchanged. However, due to satellite mobility, the relative position between two satellites might change and the corresponding channel would be different, resulting in variations in weight $w_{i,j}(m)$.
We have the following definition to further transform the problem formulation.

\begin{defn}[Time Frame]
	A time frame $\Delta \tau = \Delta p/U$ represents a temporal interval within snapshot $m$, specifically $[m\Delta p + u\Delta \tau, m\Delta p + (u+1)\Delta \tau),~u\in\{0,1,\cdots,U-1\}$, where $U$ denotes the number of time frames in a time slot. Within each time frame, the weights of graph $\bm{G}(m)$ remain constant; however, the weights may vary between different time frames.
\end{defn}
\begin{figure*}
	\centering
	\includegraphics[width=1\linewidth]{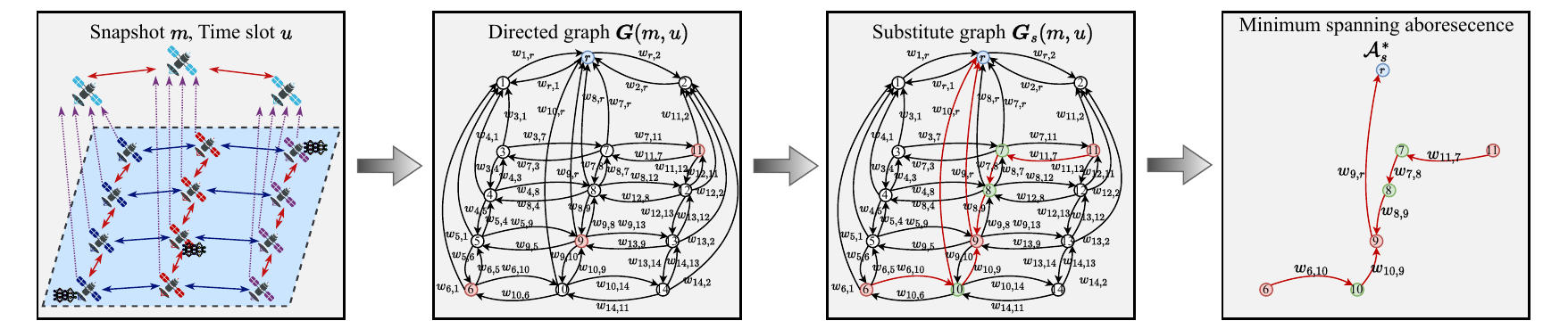}
	\caption{An example of the proposed TAEER algorithm.}
	\label{fig: solution}
	\vspace{-0.3cm}
\end{figure*}

Based on the definition of time frame, graph $\bm{G}(m)$ can be divided into $U$ subgraphs, i.e., $\bm{G}(m,u) = (\mathcal{V},\mathcal{E}(m,u)), u=0,1,\dots,U-1$. The connectivity of these subgraphs remains unchanged. However, the weights of the edges vary across subgraphs.
$\mathscr{P}_1$ can be reformulated as the following TAEER problem for each subgraph:
\begin{align}\label{eq: problem2}
	\mathscr{P}_2:~\operatorname{minimize} ~& W_u(\mathcal{B}(m,u))=\sum_{i,j\in\mathcal{V}'} w_{i,j}(m,u),
\end{align}
to minimize the energy consumption in each time frame. 
Here $\mathcal{B}(m,u)$ denotes the directed sub-branching of each time frame.
The constraint is that the edges of the directed sub-branching must be part of the subgraph $\bm{G}(m,u)$ constructed in each time frame.
This means that the solution to $\mathscr{P}_1$ can be obtained by calculating the solution to each time frame by solving $\mathscr{P}_2$. It is worth noting that the local information on each satellite is assumed to be partitioned into $U$ parts, so that each part can be transmitted within the duration of a time frame.
In this case, the energy consumption for ISLs in each time frame is given by
\begin{align}\label{eq: weight frame}
	w_{i,j}(m,u)  =  \frac{sP_T^{i}(m)}{U\gamma_{i,j}(m,u)}.
\end{align}

Based on these assumptions, $\mathscr{P}_2$ is a static problem within each time frame. The solution set to $\mathscr{P}_2$ is not finite because, although the operation of LEO constellations exhibits periodicity, the Earth's rotation causes the identity of \textit{terminals} to change across communication rounds in the hierarchical learning procedures. The overall characterization of space topology is demonstrated in Fig. \ref{fig: timeline}.

\begin{algorithm}[t]
	\caption{Proposed TAEER Algorithm}
	\label{alg: chu-liu}
	\begin{algorithmic}[1]
		\State \textbf{Initialize}: Root node $r$, graph $\bm{G}=(\mathcal{V},\mathcal{E})$, weights $w_{i,j},i,j\in\mathcal{V}$, and terminals $\mathcal{V}'$.
		\State \textbf{Input}: Graph $\bm{G}$, weights $w_{i,j}$
		\State \Comment{Shortest Path}
		\For{$i \in \mathcal{V}\setminus r$}
		\State	$\operatorname{path}\gets \operatorname{Dijkstra}(i,r)$
		\EndFor
		\State \Comment{Construct a substitute graph}
		\State $\bm{G}_s=(\mathcal{V}_s,\mathcal{E}_s)$ with weights $w^s_{i,j}$ via \eqref{eq: graph construction}.
		\State \Comment{Remove all edges leading back to root node $r$}
		\For{$(i, j) \in \mathcal{E}_s$}
		\If{$j == r$}
		\State $\mathcal{E}_s.\operatorname{remove}((i, j))$
		\EndIf
		\EndFor
		
		\State \Comment{For each vertex, find the minimum incoming edge}
		\For{$v \in \mathcal{V}'$}
		\State $\operatorname{edges} \gets \{e \in 
		\mathcal{E}_s \mid e[1] == v\}$
		\State $\pi[v] \gets \arg\min_{(ii, jj) \in \operatorname{edges}} w_{ii,jj}$
		\EndFor
		\While{\textbf{Detect} cycles $C$}
		\If{no cycles}
		\State \textbf{return} $\mathcal{A}_s'\gets$ $(\mathcal{V}_s',\mathcal{E}_s',\{w_{i,j}'\},r)$
		\Else{ \textbf{Contract} cycles in $\bm{G}_s$ forming new graph $\bm{G}_s'$}

		\For{$(i, j) \in \mathcal{E}_s$}
		\If{$i\notin C \&\& j\in C$}
		\State $e \gets (i, v_C)$		
		\State $\mathcal{E}_s'.\operatorname{add}(e)$
		\State $w_{i,v_C}' \gets w^s_{i,j} - w^s_{\pi(j),j}$
		
		\ElsIf{$i\in C \&\& j\notin C$}
		\State $e \gets (v_C, j)$		
		\State $\mathcal{E}_s'.\operatorname{add}(e)$
		\State $w_{v_C,j}' \gets w^s_{i,j} $
		
		\ElsIf{$i\notin C \&\& j\notin C$}
		\State $\mathcal{E}_s'.\operatorname{add}((i,j))$
		\State $w_{i,j}' \gets w^s_{i,j} $
		\EndIf
		\EndFor
		\EndIf
		\EndWhile
		\State\Comment{Breaking the cycle}
		\For{$(i, j) \in \mathcal{A}_s'$}
		\If{$j == v_C$}
		\State $\mathcal{E}_s'.\operatorname{remove}(\pi(j), j)$
		\State \textbf{break}
		\EndIf
		\EndFor
		\State \Return $\mathcal{A}^{\ast}$ for the approximation of $\mathcal{B}^{\ast}$.
	\end{algorithmic}
\end{algorithm}

\subsection{Proposed Topology-Aware Energy-Efficient Routing}
The topology-aware energy-efficient routing problem \eqref{eq: problem2} for global model aggregation can be modeled as a \textit{DST} problem $\mathcal{B}^{\ast}(m,u)$ in a directed graph $\bm{G}(m,u)$ \cite{zosin2002on,feldman2006the}.
The \textit{DST} problem is known to be NP-hard and the research on this problem is well-established, with many existing works providing approximations to the original problem.
For instance, a trivial algorithm for \textit{DST} problem is to obtain shortest paths from all terminals to the root and combine them \cite{CHARIKAR1999approximation}.
However, there exist several challenges for solving the \textit{DST} problem in the space networks. Specifically, the identity of satellite \textit{terminals} are different in different communication rounds and the connections between terminals and the root node cannot be guaranteed to have LOS connections, which means that relays must be used in this problem. To this end, we will introduce describe the design of a heuristic algorithm to reduce and solve the \textit{DST} problem in detail.

The main steps of the proposed solution to the energy-efficient routing problem based on \textit{DST} can be summarized as follows:
\begin{enumerate}
	\item \textit{Graph Initialization}: In time frame $u$ of time slot $m$, the graph $\bm{G}(m,u) = (\mathcal{V},\mathcal{E}(m,u))$ are initialized with weighted edges. Ensure that all terminals $\mathcal{V}'$ and the root node $r$ are included in the graph.
	\item \textit{Shortest Path}: Find the shortest distance path from each terminal node $i\in\mathcal{V}'\setminus r$ to the root node $j = r$ using the Dijkstra algorithm \cite{gallo1988shortest}, and record each predecessor node such that each terminal can trace back to the root node. This is because, compared to other similar algorithms, Dijkstra's algorithm exhibits advantages in stability and computational efficiency for solving single-source shortest-path problems on graphs with non-negative edge weights.
	Construct a set $\mathrm{Path} = \{i\rightarrow r|i\in\mathcal{V}\setminus r\}$ consisting of node and edge information for all shortest paths.
	\item \textit{Construct a Substitute Graph}: Add all the collected path edges to a new substitute subgraph $\bm{G}_s(m,u) = (\mathcal{V}_s,\mathcal{E}_s(m,u))$, which includes all terminal nodes and their paths to the root node, which consist of predecessor nodes on the path. In addition, we denote all the nodes in all paths as $\mathcal{V}_p$ and we can obtain 
	\begin{align}\label{eq: graph construction}
		\begin{aligned}
			\mathcal{V}_s &= \mathrm{Path}.\mathrm{Node}\cup r, \\
			\mathcal{E}_s(m,u) &= \mathcal{E}_{i,j}(m,u), (i,j)\in \mathrm{Path}.\mathrm{Edge}.
		\end{aligned}
	\end{align}
	\item \textit{Minimum Spanning Arborescence (MSA)}: Find the minimum spanning arborescence $\mathcal{A}^*(m,u)$ for the substitute subgraph $\bm{G}_s(m,u)$ to approximate the Directed Sub-Branching $\mathcal{B}^*(m,u)$. 
	If there are redundant branches at the ends of the spanning arborescence, pruning can be performed until the spanning arborescence is minimized.
\end{enumerate}

The key of the above steps lies in how to find a MSA for a substitute directed graph $\bm{G}_s(m,u)$.
In fact, Chu-Liu-Edmonds algorithm \cite{chu1965shortest,edmonds1967optimum} can be applied to efficiently solve the MSA problem.
We first have the definition on the arborescence.

\begin{defn}[Arborescence \cite{chu1965shortest}]
	An arborescence of a directed graph $\bm{G}_s(m,u) = (\mathcal{V}_s,\mathcal{E}_s(m,u))$ is a subgraph $\mathcal{A}_s(m,u)=(\mathcal{V},\mathcal{E}_s'(m,u))$ of the directed graph such that
	\begin{itemize}
		\item The subgraph $\mathcal{A}_s$ contains all the vertices in $\bm{G}_s$.
		\item The root node is not the source node of any edge in $\mathcal{E}_s(m,u)$.
		\item For any other vertex $i\in\mathcal{V}_s$, there exists only one vertex $j\in\mathcal{V}_s$ such that vertex $i$ is the terminal of edge $(j,i)$.
	\end{itemize}
\end{defn}
\begin{figure*}
	\centering
	\includegraphics[width=1\linewidth]{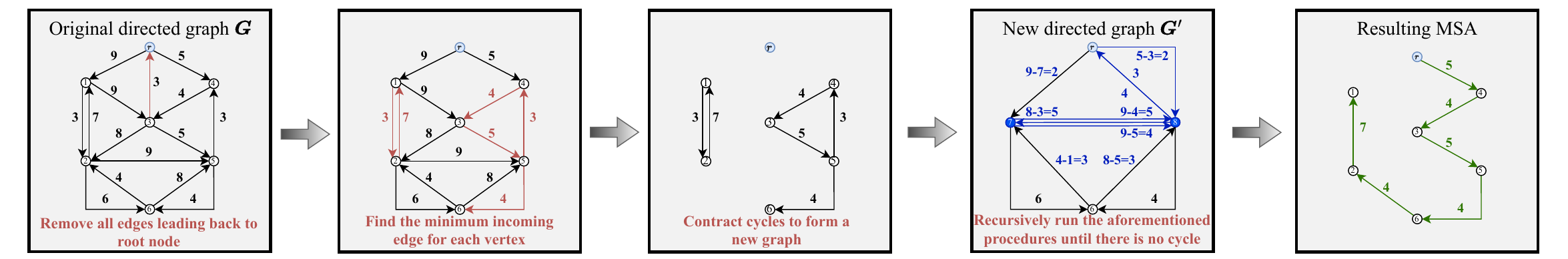}
	\caption{An example of the Chu-Liu-Edmonds algorithm.}
	\label{fig: chuliu}
	\vspace{-0.3cm}
\end{figure*}

Next, we will introduce the main steps of the Chu-Liu-Edmonds algorithm for solving the MSA problem.
\begin{enumerate}
	\item Initialization: In time frame $u$ of time slot $m$, the graph $\bm{G}_s(m,u) = (\mathcal{V}_s,\mathcal{E}_s(m,u))$ and the weights $w^s_{i,j}(m,u),i,j\in\mathcal{V}'$ are initialized on the root node $r$.
	\item Remove all edges leading back to root node $r$.
	\item For each node $v\in \mathcal{V}_s\setminus r$, find an incoming edge with the lowest weight and record the source as $\pi(v)$. If this procedure forms an arborescence, we have all the minimum edges and the algorithm can be jumped to step 5). Otherwise, there exists
	a cycle $C$.
	\item \textit{Contract}: A cycle $C$ is then contracted into a new virtual node $v_C$ and the related edges should be adjusted to form a new directed graph $\bm{G}_s'= ( \mathcal{V}_s',\mathcal{E}_s')$. The contract procedure follows the  rules below.
	\begin{itemize}
		\item The nodes of $\mathcal{V}_s'$ are the nodes of $\mathcal{V}_s$ not in $C$ plus a new virtual node $v_C$. 
		\item If an edge $(i,j)$ in $\mathcal{E}_s$ has a source out of $C$ ($i\notin C$) and a destination within $C$ ($j\in C$), i.e., a incoming edge into the cycle, then a new edge $e = (i,v_C)$ should be included in $\mathcal{E}_s'$ with weight $$w_{i,v_C}' = w^s_{i,j} - w^s_{\pi(j),j}.$$
		\item If an edge $(i,j)$ in $\mathcal{E}_s$ has a source within $C$ ($i\in C$) and a destination out of $C$ ($j\notin C$), i.e., an edge going away from the cycle, then a new edge $e = (v_C,j)$ should be included in $\mathcal{E}_s'$ with weight $$w_{v_C,j}' = w^s_{i,j}.$$
		\item If an edge $(i,j)$ in $\mathcal{E}_s$ has both source and destination out of $C$ ($i,j\notin C$), i.e., an edge unrelated to the cycle, then a new edge $e = (i,j)$ should be included in $\mathcal{E}_s'$ with weight $$w_{i,j}' = w^s_{i,j}.$$
	\end{itemize}
	By recursively calling the above operations until there is no cycle, we can obtain a MSA $\mathcal{A}_s'$.
	\item \textit{Expansion}: All non-root nodes including virtual nodes have one minimum incoming edge $e$. Remove the edge $(\pi(j),j)$ for $j\in C$, breaking the cycle. Mark each remaining edge in $C$ and each corresponding edge of $\mathcal{A}_s'$ in $\mathcal{E}_s'$, which form a MSA $\mathcal{A}^{\ast}(m,u)$.
\end{enumerate}
The detailed procedures of the proposed TAEER algorithm are presented in Algorithm \ref{alg: chu-liu}, an simple example of the solution is illustrated in Fig. \ref{fig: solution}, and an example flowchart of Chu-Liu-Edmonds algorithm for solving the MSA problem is illustrated in Fig. \ref{fig: chuliu}.
Based on the derived MSA $\mathcal{A}^{\ast}(m,u)$, a tree structure in such an LEO satellite network enables transmission-and-aggregation from each leaf node to the root node, ultimately completing the global model aggregation task.

To derive the time complexity of the proposed topology routing algorithm in each time frame, we have the following analysis:
\begin{itemize}
	\item The time complexity of Dijkstra algorithm is given by $\mathcal{O}(|\mathcal{V}|+|\mathcal{E}|\log|\mathcal{E}|)$. In the worse case, the algorithm needs to run $|\mathcal{V}'|$ times, which is the number of the terminals, and the total time complexity is given by $\mathcal{O}\left(|\mathcal{V}'|(|\mathcal{V}|+|\mathcal{E}|\log|\mathcal{E}|)\right)$. This means that the complexity is determined by the size of the space topology as well as the number of the terminals, i.e., the number of LEO satellites receiving local models from terrestrial IoT devices.
	\item The time complexity of obtaining the MSA is given by $\mathcal{O}(|\mathcal{E}_s|\log|\mathcal{V}_s|)$, which is decided by the size of the substitute graph.
	\item In summary, the overall time complexity of Algorithm \ref{alg: chu-liu} is given by $\mathcal{O}\left(|\mathcal{V}'|(|\mathcal{V}|+|\mathcal{E}|\log|\mathcal{E}|)+|\mathcal{E}_s|\log|\mathcal{V}_s|\right)$, which is dominated by the complexity of Dijkstra algorithm since $|\mathcal{V}_s|\leq|\mathcal{V}|$ and  $|\mathcal{V}'|\leq|\mathcal{V}|$.
\end{itemize}

\subsection{Extension to Unpredictable Topology}
In this subsection, we further consider some practical issues such as transmission failures caused by outages, and extend the problem formulation and solution to a more general case by balancing the energy efficiency and the robustness of the system.
On one hand, in the previous subsection, we assume that all the topology changes can be predicted in advance. 
However, there may exist some unpredictable anomalies such as satellite failure, transmission failure, and link congestion in the space topology, which cannot be avoided in practice \cite{liu2013routing,pan2022latency}.
On the other hand, the algorithm presented in Algorithm \ref{alg: chu-liu} is highly vulnerable to link failures in the MSA. 
If an edge in the spanning arborescence breaks, i.e., if a transmission failure occurs, the global model aggregation will lose part of the local model information.
To address these issues and better adapt to practical space environments, we revisit the formulation of $\mathscr{P}_1$ by incorporating realistic constraints as follows.


When solving $\mathscr{P}_1$ and $\mathscr{P}_2$, we assume that the transmission over each hop of the laser ISL is perfect. 
However, in practical systems, a pointing, acquisition, and tracking (PAT) system is required to maintain precise link alignment to ensure the LOS characteristic.
Satellite positioning inaccuracies, environmental disturbances, and mechanical vibrations are the main contributing factors to the misalignment of the telescope beam \cite{arnon2005performance,zhu2025passive}.
Misalignment of the telescope beam can lead to pointing errors \cite{arnon2005performance,polishuk2004optimization}, which in turn may cause transmission outages. This introduces uncertainties that we aim to address.
We consider transmission failure caused by outage in the laser ISL. To characterize the outage probability of the laser ISL, it can be modeled as the probability that the signal-to-noise ratio (SNR) falls below a certain threshold, which can be expressed as
\begin{align}
	P_{\text{out}}({\sf SNR}) = \mathbb{P}\{{\sf SNR}<{\sf SNR}_{\text{th}}\},
\end{align}
where ${\sf SNR}_{\text{th}}$ is the threshold of the SNR.
We denote the SNR of the ISL between nodes $i$ and $j$ as ${\sf SNR}_{ij}$ and it can be derived based on the received power \eqref{eq: receive power} by
\begin{align}
	{\sf SNR}_{i,j} = \frac{P_T^i \eta_S G_T^i G_R^j L_{PL}^{i,j} L_{PS}^{i,j}}{\sigma^2},
\end{align}
where $L_{PS}^{i,j} = (\frac{\lambda}{4\pi d_{i,j}})^2$.
We can further express this outage probability as
\begin{align}\label{eq: outage1}\nonumber
	P_{\text{out}}^{i,j}({\sf SNR}) &= \mathbb{P}\left\{{\sf SNR}_{i,j}<{\sf SNR}_{\text{th}}\right\}\\
	& = \mathbb{P}\left\{L_{PL}^{i,j}<\frac{\sigma^2{\sf SNR}_{\text{th}}}{P_T^{i}\eta_S G_T^i G_R^jL_{PS}^{i,j}}\right\}.
\end{align}
Moreover, the pointing error $\theta_0$ is assumed to follow the Rayleigh distribution \cite{chen1989impact}, and its probability density function (PDF) is given by:
\begin{align}\label{eq: Rayleigh}
	f_{\theta_0}(\theta_0) = \frac{\theta_0}{\sigma_p^2}\exp\left(-\frac{\theta_0^2}{2\sigma_p^2}\right),~\theta_0\geq0,
\end{align}
where $\sigma_p$ is a scale parameter of the distribution.
In addition, the cumulative distribution function (CDF) of the Rayleigh distribution is
\begin{align}
	F_{\theta_0}(\theta_0) = \int_{0}^{\theta_0} f_{\theta_0}(t)dt=1-\exp\left(-\frac{\theta_0^2}{2\sigma_p^2}\right).
\end{align}
\begin{lem}\label{lem: 1}
	The PDF of the pointing loss $L_{PL}$ is given by
	\begin{align}
		f_{L_{PL}}(\vartheta ) = \frac{\sigma_p\sqrt{G_0}\vartheta^{\frac{1}{2G_0\sigma_p^2}-1}}{\sqrt{-\pi\ln\vartheta}},~\vartheta \in(0,1).
	\end{align}
\end{lem}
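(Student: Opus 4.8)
The plan is to read Lemma~\ref{lem: 1} as a one–dimensional transformation-of-a-random-variable problem and attack it by change of variables. The pointing loss is a deterministic function $L_{PL}=g(\theta_0):=\exp(-G_0\theta_0^2)$ of the random pointing error $\theta_0$, whose distribution is the Rayleigh density in \eqref{eq: Rayleigh}. On the support $\theta_0\in[0,\infty)$ the map $g$ is smooth and strictly decreasing, sending $\theta_0=0$ to $L_{PL}=1$ and $\theta_0\to\infty$ to $L_{PL}\to0^+$; this is exactly what pins down, and justifies, the stated support $\vartheta\in(0,1)$. Since $g$ is a monotone bijection onto $(0,1)$, each value $\vartheta$ has a single preimage, so I can invoke the standard formula $f_{L_{PL}}(\vartheta)=f_{\theta_0}\!\big(g^{-1}(\vartheta)\big)\,\big\lvert (g^{-1})'(\vartheta)\big\rvert$ without having to sum over branches.

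First I would invert the map: from $\vartheta=\exp(-G_0\theta_0^2)$ one obtains $\theta_0=g^{-1}(\vartheta)=\sqrt{-\ln\vartheta/G_0}$, which is real and positive precisely on $(0,1)$. Next I would differentiate to get the Jacobian $\big\lvert \mathrm{d}\theta_0/\mathrm{d}\vartheta\big\rvert=\tfrac{1}{2\sqrt{G_0}\,\vartheta\sqrt{-\ln\vartheta}}$, taking care with the sign (absorbed into the absolute value) and with the $1/\vartheta$ that comes from differentiating the logarithm. Substituting $g^{-1}(\vartheta)$ into the Rayleigh density turns the Gaussian factor $\exp(-\theta_0^2/2\sigma_p^2)$ into the power $\vartheta^{1/(2G_0\sigma_p^2)}$, which is the origin of the exponent $\tfrac{1}{2G_0\sigma_p^2}-1$ in the claimed expression. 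As an independent cross-check I would also run the CDF route: because $g$ is decreasing, $F_{L_{PL}}(\vartheta)=\mathbb{P}\{\theta_0\ge g^{-1}(\vartheta)\}=\exp\!\big(-g^{-1}(\vartheta)^2/2\sigma_p^2\big)$, and differentiating this closed form avoids any Jacobian-sign pitfalls.

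The only delicate bookkeeping — and the step I would verify most carefully — is how the linear factor $\theta_0$ in the Rayleigh density combines with the $\theta_0$-dependence of the Jacobian after substitution, since both contribute competing powers of $\sqrt{-\ln\vartheta}$ and of $G_0,\sigma_p$. I would assemble the factors, simplify, and collect the $\vartheta$-exponent into $\tfrac{1}{2G_0\sigma_p^2}-1$. There is no genuine conceptual obstacle here; the risk is purely algebraic slips in the constant. To guard against that I would finish with a normalization check, integrating the resulting density over $(0,1)$ via the substitution $u=-\ln\vartheta$, which reduces the integral to a Gamma integral and thereby both confirms that the density integrates to one and fixes the leading constant unambiguously.
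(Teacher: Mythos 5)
Your method — monotone change of variables, cross-checked via the CDF — is the right one, and it is essentially the route the paper itself attempts. But if you carry out your plan faithfully, you will not arrive at the stated formula; you will disprove it. The "delicate bookkeeping" you flag at the end is exactly where the discrepancy lies: after substituting $\theta_0=g^{-1}(\vartheta)=\sqrt{-\ln\vartheta/G_0}$, the linear factor $\theta_0$ in the Rayleigh density contributes $\sqrt{-\ln\vartheta}$ in the numerator, while the Jacobian $\bigl|\mathrm{d}\theta_0/\mathrm{d}\vartheta\bigr| = \frac{1}{2\sqrt{G_0}\,\vartheta\sqrt{-\ln\vartheta}}$ contributes $\sqrt{-\ln\vartheta}$ in the denominator. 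These cancel \emph{exactly}, leaving
\begin{align}
	f_{L_{PL}}(\vartheta) \;=\; \frac{\sqrt{-\ln\vartheta/G_0}}{\sigma_p^2}\,\vartheta^{\frac{1}{2G_0\sigma_p^2}}\cdot\frac{1}{2\sqrt{G_0}\,\vartheta\sqrt{-\ln\vartheta}} \;=\; \frac{1}{2G_0\sigma_p^2}\,\vartheta^{\frac{1}{2G_0\sigma_p^2}-1},\qquad \vartheta\in(0,1),
\end{align}
i.e.\ a pure power-law (Beta) density with no $\sqrt{-\pi\ln\vartheta}$ factor. Your CDF cross-check confirms this: $F_{L_{PL}}(\vartheta)=\mathbb{P}\{\theta_0\ge g^{-1}(\vartheta)\}=\exp\bigl(-\tfrac{-\ln\vartheta}{2G_0\sigma_p^2}\bigr)=\vartheta^{1/(2G_0\sigma_p^2)}$, whose derivative is the expression above. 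Your proposed normalization check is the decisive test: the lemma's formula integrates (via $u=-\ln\vartheta$) to $\sqrt{2}\,G_0\sigma_p^2\neq 1$ in general, so it cannot be a PDF, whereas the power-law density integrates to one.

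The reason the paper lands on the stated (incorrect) expression is an invalid identity in its own proof: step $(a)$ asserts $1-\exp(-x^2)=\mathrm{erf}(\sqrt{x^2})$, which is false — $\mathrm{erf}(x)=\tfrac{2}{\sqrt{\pi}}\int_0^x e^{-t^2}\,\mathrm{d}t$ is not $1-e^{-x^2}$. Rewriting the Rayleigh CDF as an error function spuriously injects the Gaussian factor $\tfrac{2}{\sqrt{\pi}}e^{-g(\vartheta)^2}$ upon differentiation, which is precisely the origin of the $\sqrt{-\pi\ln\vartheta}$ in the claimed denominator (the paper's chain rule also drops a factor $\tfrac{1}{2G_0\sigma_p^2}$). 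So the gap is not in your approach but in the target statement: executed correctly, your argument proves the power-law form above, and the downstream outage probability in \eqref{eq: outage prob} should be computed with that density (equivalently, directly as $P_{\text{out}}^{i,j}=F_{L_{PL}}(\Gamma_0^{i,j})=(\Gamma_0^{i,j})^{1/(2G_0\sigma_p^2)}$ for $\Gamma_0^{i,j}\in(0,1)$).
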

\begin{proof}
We can obtain the following probability:
\begin{align}\nonumber
	\mathbb{P}\{L_{PL}\leq\vartheta\}=&\mathbb{P}\{\exp(-G_0\theta_0^2)\leq\vartheta\}\\\nonumber
	=&\mathbb{P}\left\{\theta_0>\sqrt{\frac{-\ln\vartheta}{G_0}}\right\}\\\nonumber
	=&1-F_{\theta_0}\left(\sqrt{\frac{-\ln\vartheta}{G_0}}\right)\\
	\overset{(a)}{=}&1-\mathrm{erf}\left(\sqrt{\frac{-\ln\vartheta}{2G_0\sigma_p^2}}\right),
\end{align}
where $(a)$ holds by the fact that $1-\exp(-x^2)=\mathrm{erf}(\sqrt{x^2})$. To derive the PDF from its CDF expressed with the error function, it requires us to differentiate the CDF with respect to $\vartheta$, i.e.,
\begin{align}
	f_{L_{PL}}(\vartheta ) =-\frac{\mathrm{d}}{\mathrm{d}\vartheta}\left[\mathrm{erf}\left(\sqrt{\frac{-\ln\vartheta}{2G_0\sigma_p^2}}\right)\right].
\end{align}
By defining $g(\vartheta) = \sqrt{\frac{-\ln\vartheta}{2G_0\sigma_p^2}}$, we have
\begin{align}\nonumber
	f_{L_{PL}}(\vartheta ) =& -\frac{\mathrm{d}}{\mathrm{d}\vartheta}[\mathrm{erf}(g(\vartheta))]\\\nonumber
	=&-\frac{\mathrm{d}}{\mathrm{d}g(\vartheta)}[\mathrm{erf}(g(\vartheta)]\cdot\frac{\mathrm{d}g(\vartheta)}{\mathrm{d}\vartheta}\\\nonumber
	\overset{(a)}{=}&-\frac{2}{\sqrt{\pi}}\exp(-g(\vartheta)^2)\cdot\frac{1}{2}\cdot\frac{1}{\sqrt{\frac{-\ln\vartheta}{2G_0\sigma_p^2}}}\cdot\left(-\frac{1}{\vartheta}\right)\\
	=&\frac{\sigma_p\sqrt{G_0}\vartheta^{\frac{1}{2G_0\sigma_p^2}-1}}{\sqrt{-\pi\ln\vartheta}}.
\end{align}
This completes the proof.
\end{proof}

By Lemma~\ref{lem: 1}, the outage probability in \eqref{eq: outage1} can be rewritten as an integral that explicitly depends on the physical parameters of the laser ISL:
\begin{align}\label{eq: outage prob}
	P_{\text{out}}^{i,j}({\sf SNR}) = \int_{0}^{\Gamma_0^{i,j}}f_{L_{PL}}(\vartheta )\mathrm{d}\vartheta,
\end{align}
where $\Gamma^{i,j}_0 = \frac{\sigma^2{\sf SNR}_{\text{th}}}{P_T^{i}\eta_S G_T^i G_R^jL_{PS}^{i,j}}$.

To compute the overall outage probability of an aggregation path, which contains multi-hop laser ISLs, we first compute its success probability as
\begin{align}\label{eq: problem3}
	P_{\text{succ}}(\mathcal{B}(m,u))=\prod_{i,j\in\mathcal{V}'} [1-p_{\text{out}}^{i,j}(m,u)].
\end{align}
The corresponding outage probability is then given by:
$P_{\text{out}}(\mathcal{B}(m,u)) = 1 - P_{\text{succ}}(\mathcal{B}(m,u))$.
By taking the logarithm of both sides of \eqref{eq: problem3}, we obtain an equivalent expression, i.e.,
\begin{align}\label{eq: problem31}
	\log P_{\text{succ}}(\mathcal{B}(m,u))=\sum_{i,j\in\mathcal{V}'} \log [1-p_{\text{out}}^{i,j}(m,u)].
\end{align}
Therefore, minimizing the outage probability of the aggregation path in each time frame to address the uncertainty is equivalent to minimizing $-\log P_{\text{succ}}(\mathcal{B}(m,u))$, which yields the reformulated optimization problem:
\begin{align}\label{eq: problem32}
	\mathscr{P}_{3}:~\operatorname{minimize}~ & \sum_{i,j\in\mathcal{V}'} \log \left[\frac{1}{1-p_{\text{out}}^{i,j}(m,u)}\right].
\end{align}
The key observation of \eqref{eq: problem32} is that it shares the same mathematical structure as \eqref{eq: problem2}, both aiming to minimize the summation of certain edge weights over a directed sub-branching.
As a result, to balance the energy efficiency and the outage probability of the routing problem, the objective function can be formulated as the linear combination of $\mathscr{P}_{2}$ and $\mathscr{P}_{3}$, i.e.,
\begin{align}\label{eq: problem4}
	\mathscr{P}_{4}:~\operatorname{minimize} ~& \rho W_u(\mathcal{B}(m,u))-(1-\rho)\log P_{\text{succ}}(\mathcal{B}(m,u)).
\end{align}
It is worth noting $\mathscr{P}_{4}$ can be directly solved by adjusting the weight of the directed graph to $w^{r}_{i,j} = \rho w_{i,j} + (1-\rho)\log \frac{1}{1-P_{\text{out}}^{i,j}}$ using Algorithm \ref{alg: chu-liu}.


Finally, we consider the Licklider Transmission Protocol (LTP), which is commonly used in deep space communications where link disruptions are frequent \cite{yu2015performance,yang2023study}. LTP sends data packets in segments, similar to the method employed in this paper where model data packets are divided into 
$ U $ blocks for transmission \eqref{eq: weight frame}.
Once a transmission failure occurs, a retransmission request can be sent to initiate a retransmission to ensure the integrity of the model transmission.

\subsection{Convergence Results}
To analyze convergence performance of the proposed hierarchical learning algorithm, we have the following assumptions and theorem.
		\begin{ass}\label{ass: lower}
	The objective function $f$ is lower bounded by a scalar $f^{\inf}$, i.e., $f(\bm{x})\geq f^{\inf}>-\infty$.
\end{ass}

\begin{ass}[$L$-Smoothness] \label{ass: smooth}
	The loss function of each device $i$ in cluster $j$, each cluster $j$, and all clusters, i.e., $f_{i,j}(\cdot)$, $ f_{j}(\cdot) $, and $ f(\cdot) $ are all $L$-smooth. For all $\bm{x}$ and $\bm{y}$, we have
	\begin{equation*}
		\|\nabla f_{i,j}(\bm{x})-\nabla f_{i,j}(\bm{y})\|_2 \leq L \|\bm{x} - \bm{y}\|,
	\end{equation*}
	and 
	$$f(\bm{y})\leq f(\bm{x}) + \left\langle \nabla f(\bm{x}),\bm{y-x}\right\rangle+\frac{L}{2}\left\|\bm{y-x}\right\|_2^2.$$
\end{ass}

\begin{ass}[Unbiased Gradient and Bounded Variance] \label{ass: gradient variance}
	Each device has an unbiased stochastic gradient estimation with bounded variance, i.e., $$\mathbb{E}[\tilde{\nabla} f_{i,j}(\bm{x}) ] = \nabla f_{i,j}(\bm{x}),$$$$\mathbb{E} \left[ \| \tilde{\nabla} f_{i,j}(\bm{x}) - \nabla f_{i,j}(\bm{x}) \|_2^2 \right] \leq \sigma^2 .$$

\end{ass}

\begin{ass}[Bounded Gradient Dissimilarity]\label{ass: inter-orbit dissimilarity}
	There exists constants $\alpha \geq 1$ and $\beta\geq 0$ such that
	\begin{align}\label{eq: common bound}
		\frac{1}{J}\sum_{n=1}^{J}\|\nabla   f_n(\bm x)\|_2^2\leq \beta+\alpha\|\nabla f(\bm x)\|_2^2.
	\end{align}
\end{ass}

Assumptions \ref{ass: lower}, \ref{ass: smooth}, and \ref{ass: gradient variance} are commonly used in stochastic optimization for the objective function \cite{bottou2018optimization}.
Assumption \ref{ass: inter-orbit dissimilarity} is adopted to characterize the non-IID extent of the local data distribution \cite{zhang2020fedpd}. Similar assumptions
have also been made in the literature \cite{yu2019on,fang2022communication} for the convergence analysis under the non-i.i.d. setting. 

\begin{thm}[Error Bound of Hierarchical Learning Under Nonconvexity]\label{thm: 1}
	Let Assumptions \ref{ass: lower}, \ref{ass: smooth}, \ref{ass: gradient variance}, and \ref{ass: inter-orbit dissimilarity} hold. When the learning rate is bounded as
	\begin{align}\nonumber
		\eta = \frac{1}{L}\sqrt{\frac{J}{ET}}\leq\min\left\{\frac{1}{2LE},\frac{1}{L\sqrt{2E(E-1)(2\alpha+1)}}\right\},
	\end{align}
	the minimal gradient norm of the global loss function after $T$ communication rounds is bounded by
	\begin{align}\nonumber
		\min_{t\in[T]}\mathbb{E}\left[\|\nabla f(\bm{x}^t)\|_2^2\right]\leq&\mathcal{O}\left(\frac{1+\bar{\sigma}^2}{\sqrt{JET}}\right)\\+&\mathcal{O}\left(\frac{J(E-1)(\sigma^2+2E\beta)}{ET}\right),
	\end{align}
	where $\mathcal{O}(\cdot)$ omits the constants and $\bar{\sigma}^2 = J\sum_{n=1}^{J}\sigma^2$.  
\end{thm}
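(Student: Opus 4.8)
The plan is to establish the standard descent-lemma-based convergence bound for nonconvex federated optimization with local SGD steps, adapted to the two-level hierarchical aggregation structure. First I would unroll the global update: since $\bm{x}^{t+1} = \bm{x}^t + \sum_{i,j}\lambda_{i,j}\Delta\bm{x}_{i,j}^t = \bm{x}^t - \eta\sum_{i,j}\lambda_{i,j}\sum_{e=0}^{E-1}\tilde{\nabla}f_{i,j}(\bm{x}_{i,j}^{t,e})$, the effective global step is a weighted sum of local stochastic gradients evaluated at \emph{drifted} iterates. I would apply the $L$-smoothness descent inequality from Assumption~\ref{ass: smooth} to $f(\bm{x}^{t+1})$ versus $f(\bm{x}^t)$, take conditional expectation over the stochastic gradients (using the unbiasedness in Assumption~\ref{ass: gradient variance}), and split the inner-product term so that the dominant contribution is $-\eta E\langle\nabla f(\bm{x}^t),\nabla f(\bm{x}^t)\rangle$ plus a client-drift cross term, while the squared-norm term is controlled by the bounded-variance bound and splitting $\mathbb{E}\|\sum\cdots\|^2$ into a variance part (scaling like $\eta^2 E\sigma^2$) and a bias part.

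The central technical step is bounding the \textbf{client drift}, i.e.\ the quantity $\mathbb{E}\|\bm{x}_{i,j}^{t,e}-\bm{x}^t\|_2^2$ measuring how far each local iterate wanders from the synchronization point over the $E$ local steps. I would derive a recursive inequality for this drift by writing $\bm{x}_{i,j}^{t,e}-\bm{x}^t = -\eta\sum_{e'=0}^{e-1}\tilde{\nabla}f_{i,j}(\bm{x}_{i,j}^{t,e'})$, applying $L$-smoothness to relate local gradients at drifted points back to gradients at $\bm{x}^t$, invoking the bounded-variance assumption for the stochastic noise, and then invoking the bounded-gradient-dissimilarity condition \eqref{eq: common bound} from Assumption~\ref{ass: inter-orbit dissimilarity} to replace the averaged local gradient norms $\frac{1}{J}\sum_n\|\nabla f_n(\bm{x}^t)\|_2^2$ with $\beta + \alpha\|\nabla f(\bm{x}^t)\|_2^2$. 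This produces a drift bound of order $\eta^2 E(E-1)(\sigma^2 + E\beta + E\alpha\|\nabla f(\bm{x}^t)\|_2^2)$, where the factor $E(E-1)$ arises from summing over the local iteration index.

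I would then substitute this drift bound back into the descent inequality, collect the coefficient multiplying $\|\nabla f(\bm{x}^t)\|_2^2$, and verify that the stated learning-rate constraint $\eta\leq\min\{\frac{1}{2LE},\frac{1}{L\sqrt{2E(E-1)(2\alpha+1)}}\}$ is \emph{precisely} what is needed to make this coefficient negative (of order $-\eta E/2$ or similar), so that the gradient term becomes a usable negative contribution rather than being swamped by the drift. The final step is telescoping: summing the one-step descent inequality from $t=0$ to $T-1$, using the lower bound $f^{\inf}$ from Assumption~\ref{ass: lower} to bound $f(\bm{x}^0)-f^{\inf}$, dividing through by the accumulated coefficient of $\eta E T$, and substituting the explicit choice $\eta=\frac{1}{L}\sqrt{J/(ET)}$ to convert the $1/(\eta ET)$ terms into the $\mathcal{O}((1+\bar\sigma^2)/\sqrt{JET})$ leading rate and the residual drift terms into $\mathcal{O}(J(E-1)(\sigma^2+2E\beta)/(ET))$. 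The main obstacle will be the bookkeeping in the drift recursion: correctly tracking the weights $\lambda_{i,j}$ through Jensen/Cauchy-Schwarz steps so the dissimilarity bound applies cleanly to the weighted average, and ensuring the $E(E-1)$ and $J$ factors land exactly as in the theorem statement rather than off by a constant or a power of $E$.
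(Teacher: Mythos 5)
Your proposal is correct and follows essentially the same route as the paper's proof: the paper simply outsources the descent-lemma-plus-drift-recursion analysis you outline to the cited prior work \cite[Supplementary Material]{zhu2024over}, obtaining the intermediate bound $\frac{1}{T}\sum_{t}\mathbb{E}[\|\nabla f(\bm{x}^t)\|_2^2]\leq\frac{4[f(\bm{x}^{0})-f^{\inf}]}{\eta ET}+4\eta L\sigma^2\sum_{n}\lambda_{n}^2+3\eta^2L^2\sigma^2(E-1)+6\eta^2L^2\beta E(E-1)$, and then substitutes $\eta=\frac{1}{L}\sqrt{J/(ET)}$ exactly as in your final telescoping step. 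Your drift bound of order $\eta^2E(E-1)(\sigma^2+E\beta+E\alpha\|\nabla f(\bm{x}^t)\|_2^2)$ and the role you assign to the two learning-rate conditions are consistent with the constants in that intermediate bound, so your reconstruction fills in precisely what the paper's citation elides.
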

\begin{proof}
	It is worth noting that the aggregation is based on the average of all devices' local information.
	Based on the updating rules, i.e., 
	\begin{align*}
		\bm{x}^{t+1}:=\bm{x}^{t}-\eta\sum_{n=1}^{J}\sum_{e=0}^{E-1} \lambda_{n}\tilde{\nabla}f_{n}(\bm{x}_{n}^{t,e}),
	\end{align*}
	and the authors previous results in \cite[Supplementary Material]{zhu2024over}, the average
	norm of global gradients after $T$ communication rounds can be bounded by
	\begin{align}\label{eq: average1}\nonumber
		&\frac{1}{T}\sum_{t=0}^{T}\mathbb{E}\left[\|\nabla f(\bm{x}^t)\|_2^2\right]\\\nonumber
		\leq&\frac{4[f(\bm{x}^{0})-f^{\inf}]}{\eta ET}+4\eta L\sigma^2\sum_{n=1}^{J}\lambda_{n}^2+3\eta^2L^2\sigma^2(E-1)\\
		+&6\eta^2L^2\beta E(E-1).
	\end{align}
	The learning rate should satisfy $\eta\leq\min\{\frac{1}{2LE},\frac{1}{L\sqrt{2E(E-1)(2\alpha+1)}}\}$.
	In addition, \eqref{eq: average1} can be further optimized by setting
	$\eta = \frac{1}{L}\sqrt{\frac{J}{ET}}$, which completes the proof.
\end{proof}

From Theorem \ref{thm: 1}, we can conclude that the convergence rate of the hierarchical learning algorithm in this paper is decided by $\mathcal{O}\left(\frac{1+\bar{\sigma}^2}{\sqrt{JET}}\right)$, which presents a linear speedup caused by $E$ rounds of local updates. An additional error is caused by data heterogeneity and multiple local updates, which means the proposed algorithm has an error gap compared with centralized training.


\section{Simulations}\label{sec: simulations}
\subsection{Experiment Setup}\label{sec: setup}
\subsubsection{Network Configurations}
To verify the performance of the proposed algorithms, we consider a $80/4/1$ Walker-Star constellation system and a $80/4/1$ Walker-Delta constellation system, which consists of $N=4$ orbit planes and $S_n = 20$ satellites evenly distributed in each orbit for default settings. Three GEO satellites are evenly distributed in the Equatorial plane orbit.
In order to characterize the impact of the network size, we also consider a $240/8/1$ Walker-Star constellation system for comparison.
We also consider a $800/20/1$ large Walker-Star constellation system to simulate the real-world network scale.
The inclination of the Walker-Delta constellation is 45 degree and that of Walker-Star constellation is 99.5 degree. The altitude of the Walker-Delta orbits is 500 km and that of Walker-Star constellation is 700 km.
Fig. \ref{fig: simulation system} illustrates the virtual space-ground integrated system constructed for our simulations, encompassing terrestrial devices, different LEO satellite constellations, and GEO satellites. Simulation system with $240/8/1$ and $800/20/1$ Walker-Star constellation can be extended from Fig. \ref{fig: walker-star}.

\begin{figure}[]
	\begin{subfigure}{0.24\textwidth}
		\centering
		\includegraphics[width=1\linewidth]{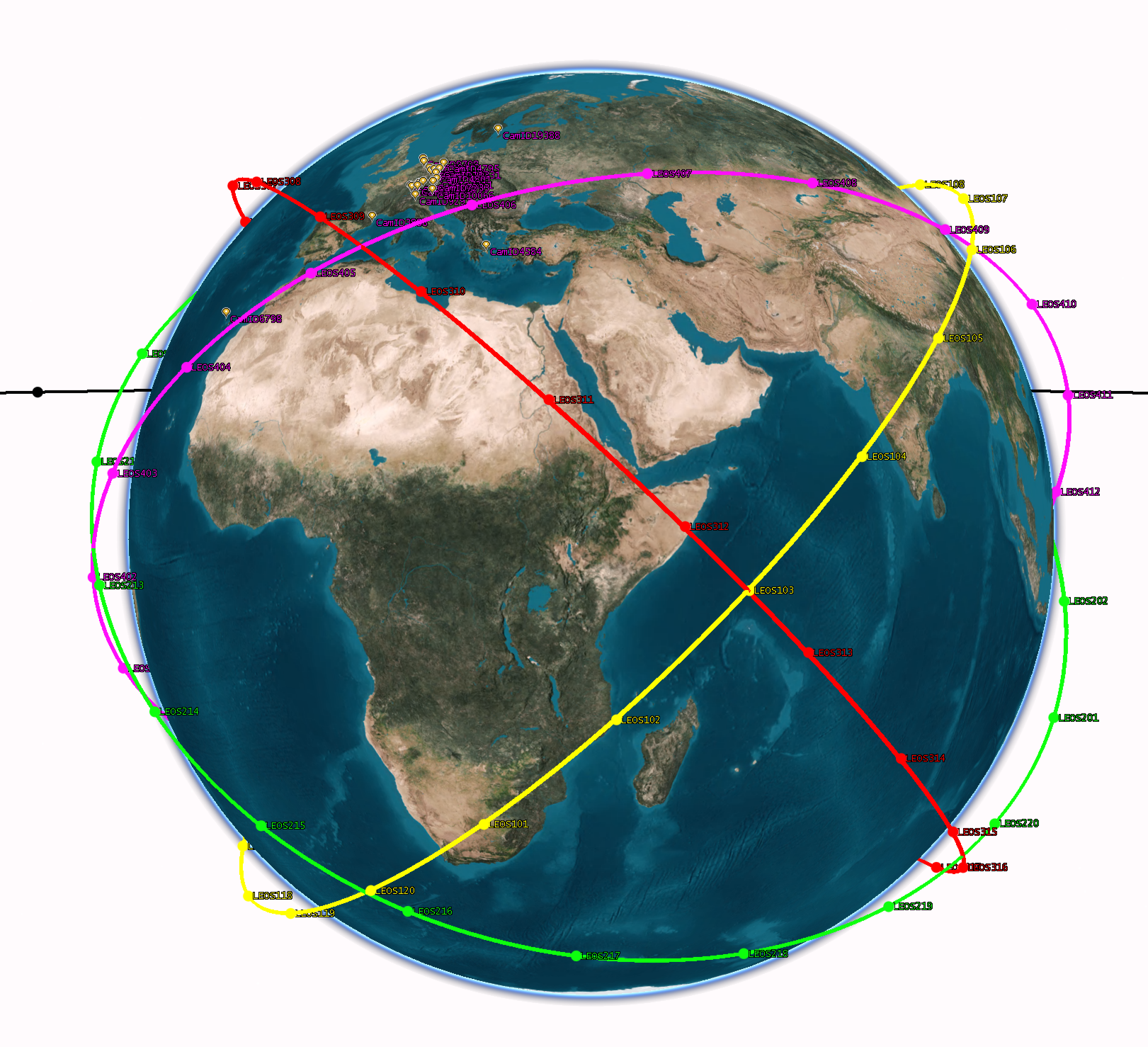}
		\caption{Simulation system with a 80/4/1 Walker-Delta constellation.}
		\label{fig: walker-delta}
	\end{subfigure}
	\hfill
	\begin{subfigure}{0.24\textwidth}
		\centering
		\includegraphics[width=1\linewidth]{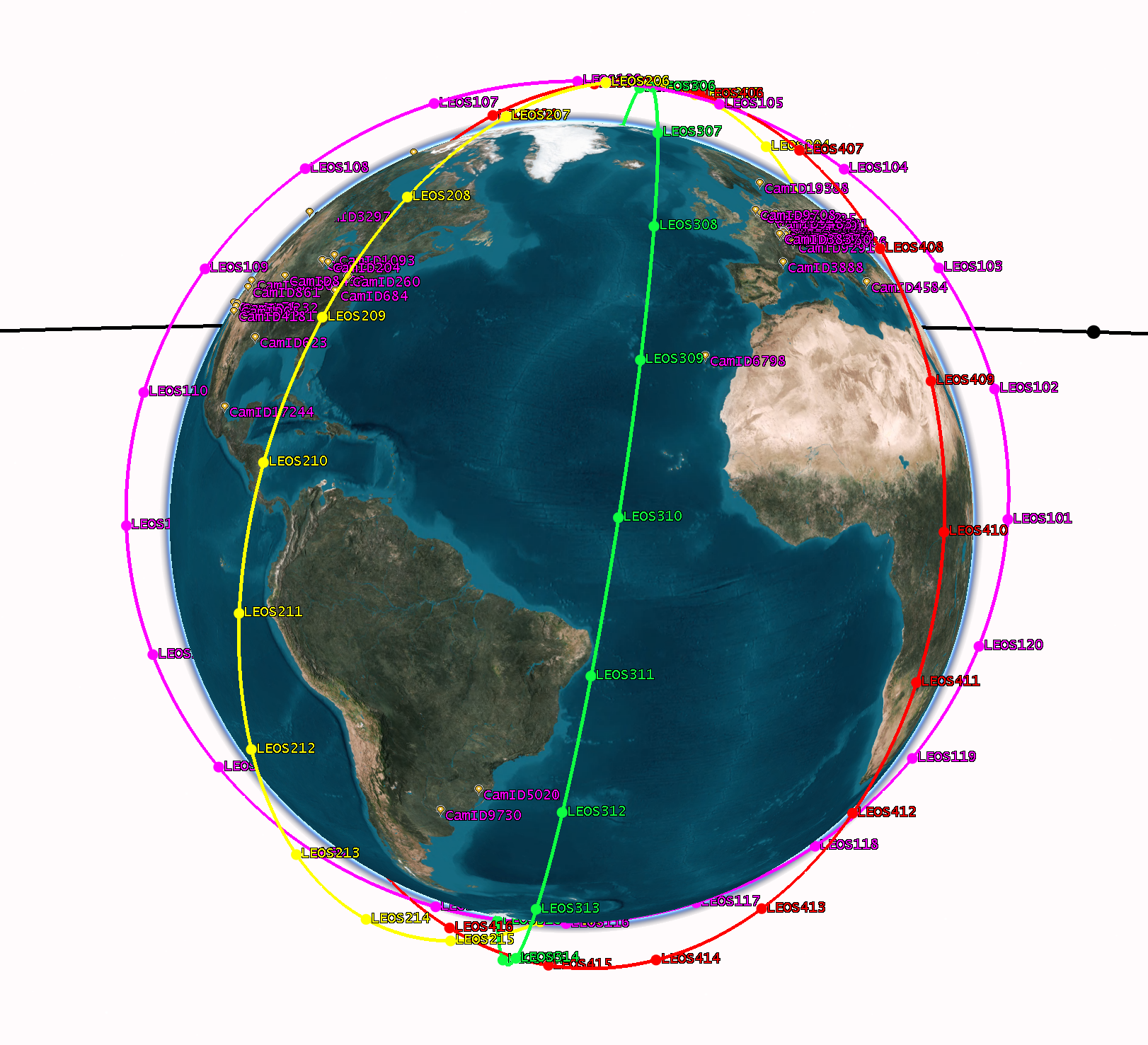}
		\caption{Simulation system with a 80/4/1 Walker-Star constellation.}
		\label{fig: walker-star}
	\end{subfigure}
	\caption{Simulation system with different LEO constellations.}
	\label{fig: simulation system}
	\vspace{-0.5cm}
\end{figure}

\begin{figure*}[!]
	\begin{subfigure}{0.32\textwidth}
		\centering
		\includegraphics[width=1\linewidth]{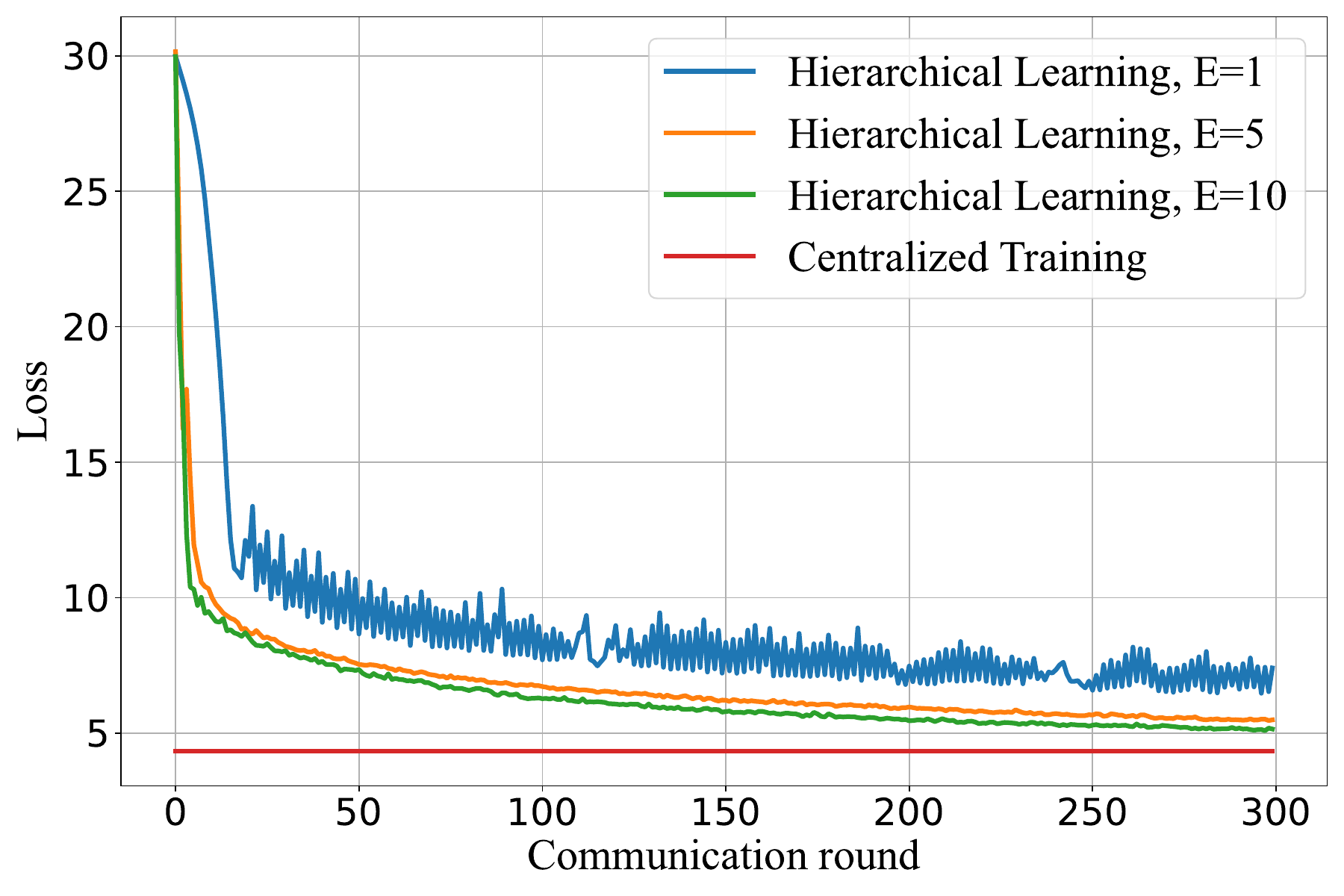}
		\caption{Training loss versus number of communication rounds.}
		\label{fig: loss}
	\end{subfigure}
	\hfill
	\begin{subfigure}{0.32\textwidth}
		\centering
		\includegraphics[width=1\linewidth]{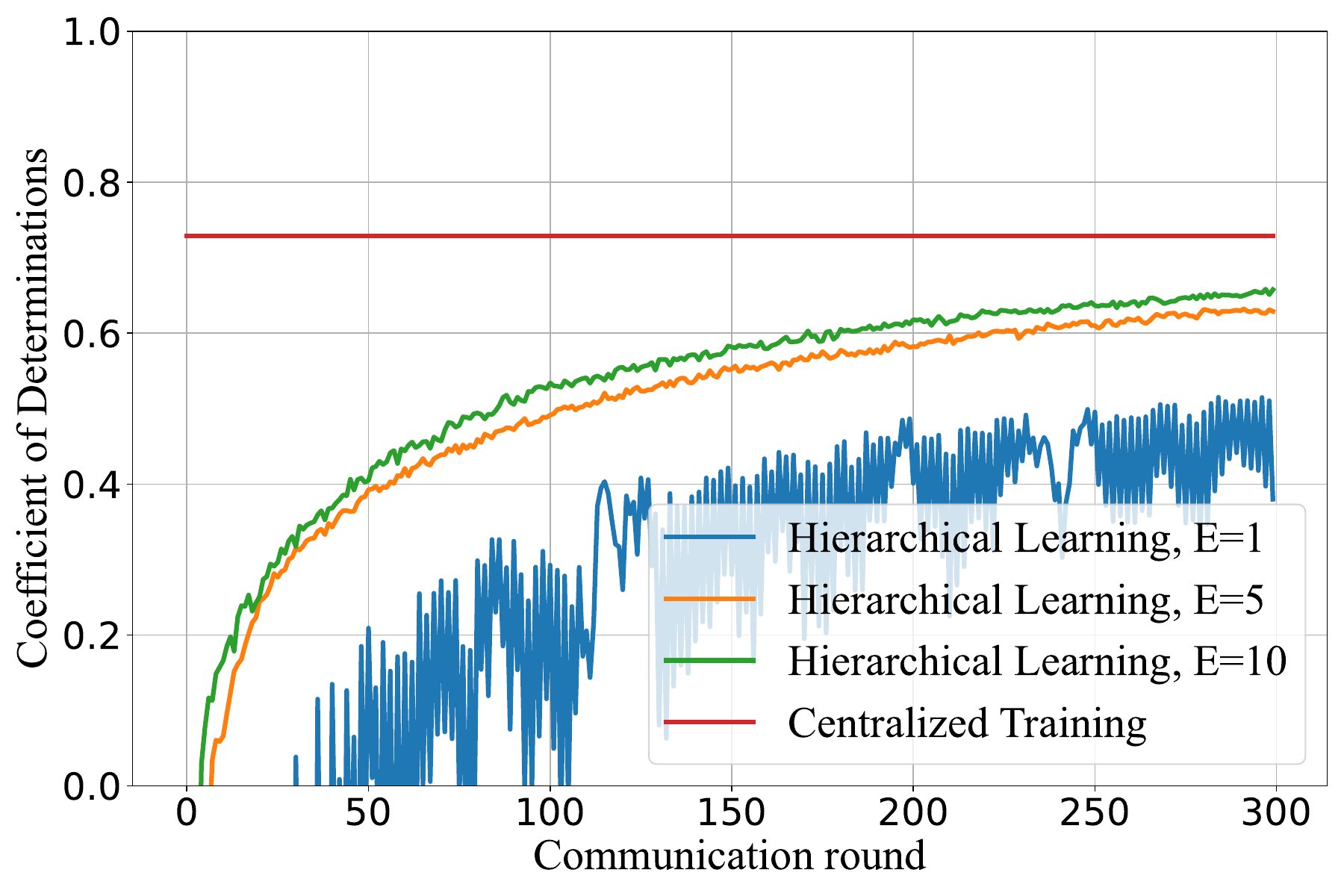}
		\caption{$R^2$ versus number of communication rounds.}
		\label{fig: r2}
	\end{subfigure}
	\hfill
	\begin{subfigure}{0.32\textwidth}
		\centering
		\includegraphics[width=1\linewidth]{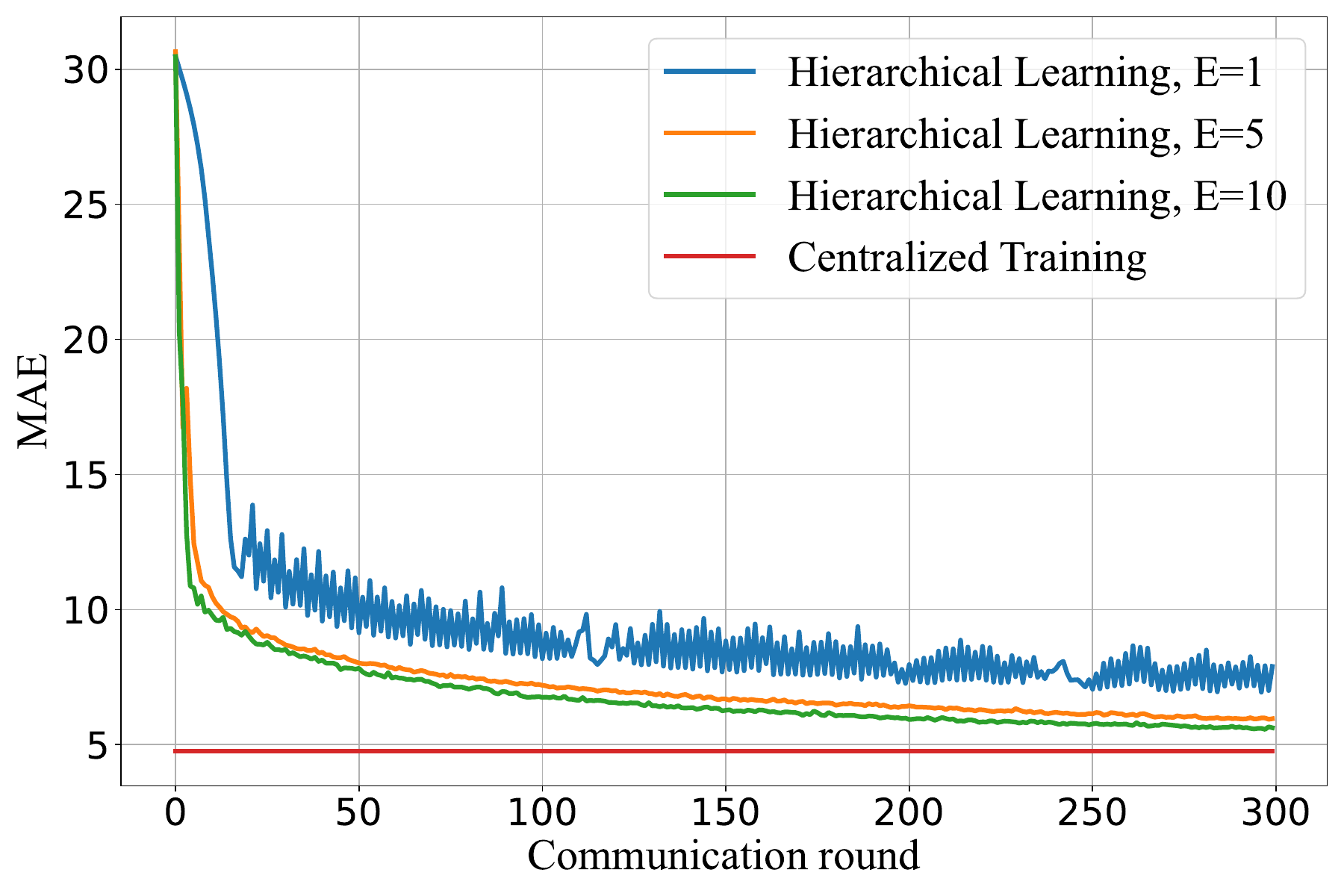}
		\caption{Testing MAE versus number of communication rounds.}
		\label{fig: mae}
	\end{subfigure}
	\caption{The learning performances of hierarchical learning versus centralized training.}
	\label{fig: learning performance}
	\vspace{-0.5cm}
\end{figure*}

\subsubsection{Learning Tasks}
Retrieving atmospheric environmental parameters from digital images is a new approach for horizontal environmental monitoring.
We consider the Sub-SkyFinder (SSF) dataset \cite{su2022retrieval}, which contains 24,328 webcam images captured by 41 different webcams in different locations. The digital images in the SSF dataset are captured during the years 2011–2014. The reference values of the SSF datasets, i.e., visibility, relative humidity, and temperature, are available from the weather stations that are nearby the locations where the webcams installed. The learning task is a regression task, where the input data are webcam images with $160\times120$ pixels and the labels are the values of visibility, relative humidity, and temperature corresponding to each image. $J=41$ webcams serve as terrestrial IoT devices and each IoT device trains an AlexNet \cite{krizhevsky2012imagenet} as the local model for the local learning task.
Each webcam only utilizes local collected data for training its local model.
Specifically, we use a pre-trained AlexNet model and modify the final fully connected layer in the classifier section from the original 1000 output nodes to 3 output nodes to fit our regression requirements. The loss function is $\mathsf{SmoothL1Loss}$, which is given by
\begin{align}
\ell(x) = 
\begin{cases} 
	\frac{1}{2} x^2 & \text{if } |x| < 1, \\
	|x| - \frac{1}{2} & \text{otherwise}.
\end{cases}
\end{align}
The batch-size is set to be 32. The learning rate is set to be $1 \times 10^{-3}$.
The number of communication rounds $T$ and the number of local update $E$ are set to be $300$ and $5$, respectively.
Three metrics are used for analyzing the learning performance, which are the loss, the coefficient of determinations ($R^2$), and the mean absolute error (MAE).
We compare the proposed hierarchical learning $E=\{1, 5, 10\}$ with geographical located devices with the centralized training case, where all the data are collected and training on the cloud server to verify the convergence results.
We run all the experiments on CUDA version 11.7, PyTorch version 2.0.0, and Python version 3.9. 
The terrestrial webcams are also marked in Fig. \ref{fig: simulation system}.

\begin{table}[t]
	\centering
	\caption{Parameters Settings}
	\renewcommand\arraystretch{1.2}
	\footnotesize
	\begin{tabular}{l|l}
		\hline
		\textbf{Parameter} & \textbf{Value} \\ \hline
		Constellation & Walker Star / Delta \\
		Number of LEOs ($S$) & 80 \\ 
		Number of orbits ($N$) & 4 \\ 
		Altitude ($h$) & 700 / 500 km \\ 
		Length of time slots ($\Delta p$) & 250 s\\
		Length of time frames ($\Delta \tau$) & 10 s\\
		\hline
		Wavelength ($\lambda$)& 2.14 cm\\
		Laser carrier frequency ($f_c$) & 193 THz \\ 
		System bandwidth ($B$)& 0.02$f_c$\\
		Transmit power ($P_T$) & $\mathrm{Unif}$(0.0316, 5) W \\ 
		System optical efficiency ($\eta_S$) & 0.8 \\ 
		Receiving telescope diameter ($D_R$) & 6 mm \\ 
		Pointing error ($\theta_0$) & 0.01 rad \\
		3-dB Beamwidth ($\theta_{\text{3dB}}$)& 0.1 rad\\
		Boltzmann constant ($k_b$) & 1.38$\times10^{-23}$ J/K \\
		Solar brightness temperature ($T_s$) & 6000 K \\
		System noise temperature ($T_0$) & 1000 K \\
		CMB temperature ($T_{\text{CMB}}$)& 2.725 K \\
		Distribution parameter ($\sigma_p$) & 0.05 \\ 
		SNR threshold (${\sf SNR}_{\text{th}}$) & -110 dB \\ \hline
		Number of devices ($J$) & 41 \\
		Total communication rounds ($T$) & 300 \\
		Number of local updates ($E$) & 5 \\ 
		Batch size & 32 \\ 
		Learning rate $\eta$& 0.001\\
		\hline
	\end{tabular}
	\label{tab: parameters}
\end{table}

\subsubsection{Comparison Algorithms}
We consider the following algorithms which can be applied to solving $\mathscr{P}_2$ and $\mathscr{P}_4$.
\begin{itemize}
	\item \textbf{TAEER:} The proposed algorithm presented in Algorithm \ref{alg: chu-liu}. The root node $r$ is chosen from the terminals and is responsible for relaying the aggregation model to the GEO satellite.
	\item \textbf{D-Merge:} We also propose a simple solution by first finding the shortest distance paths from all the terminals to the root then merging all the paths to form a spanning tree for approximation. This method is a $|\mathcal{V}'|$-approximation of the \textit{DST} problem. Also, the root node $r$ is chosen from the terminals and is responsible for relaying the aggregation model to the GEO satellite.
	\item \textbf{Orbit-Greedy:} Inter-orbit ISLs are not considered in this case. Firstly, the minimum arc that containing all terminals in one orbit is established for each orbit. Then, it randomly selects a root node in the orbital minimum arc to which the rest of the terminals transmit their collected models. All root nodes transmit the final information to the GEO satellite.
\end{itemize}

\subsubsection{Experimental Platforms}
We use the STK version 11.6 tool to model the SGINs. Specifically, the $\mathrm{Chain}$ and $\mathrm{Constellation}$ objects are used for modeling both the ground-space connection and space topology. 
Specifically, $\mathrm{Chain}$ objects that connects ground devices and the LEO constellations, i.e., the $\mathrm{Constellation}$ objects, are utilized to determine the terminal satellites receiving local models. Matlab-STK connectors is utilized to extract range information between LEO satellites for space topology construction.
The scenario time period is set to be from 25 Jun 2024 02:00:00.000 UTCG to 26 Jun 2024 02:00:00.000 UTCG.

The parameter settings in this paper are summarized in Table \ref{tab: parameters}, where the selection of parameters is primarily based on \cite{zhai2023fedleo,polishuk2004optimization, nie2021channel, arnon2005performance, wang2007topological}. The simulation code of this paper under a Walker Star constellation available at https://github.com/Jingyang-Zhu/HierLC as a demonstration.


\subsection{Performance Evaluation}
In this subsection, we evaluate the performance of the proposed algorithm from the following aspects.
\subsubsection{Learning Performance versus Communication Rounds}
The training loss, test $R^2$, and MAE versus communication rounds of the environmental parameter retrieval are shown in Fig. \ref{fig: learning performance}.
As observed in Fig. \ref{fig: learning performance}, our proposed hierarchical learning architecture progressively approaches the optimal performance of centralized training as the number of communication rounds increases. The performance gap is attributed to the non-independent and identically distributed (non-IID) nature of the data generated at terrestrial devices in different geographic locations. Given that the hierarchical learning architecture does not require uploading end data to the cloud server, it effectively protects image privacy for the environmental monitoring applications.
{In addition, to demonstrate the impact of the number of local updates on the hierarchical learning architecture, as shown in Fig. \ref{fig: learning performance}, the learning performance and convergence speed under $E=10$ outperform those with $E=5$ and $E=1$, which aligns with the conclusions of our convergence analysis.}

\begin{figure}[!]
	\begin{subfigure}{0.49\textwidth}
		\centering
		\includegraphics[width=1\linewidth]{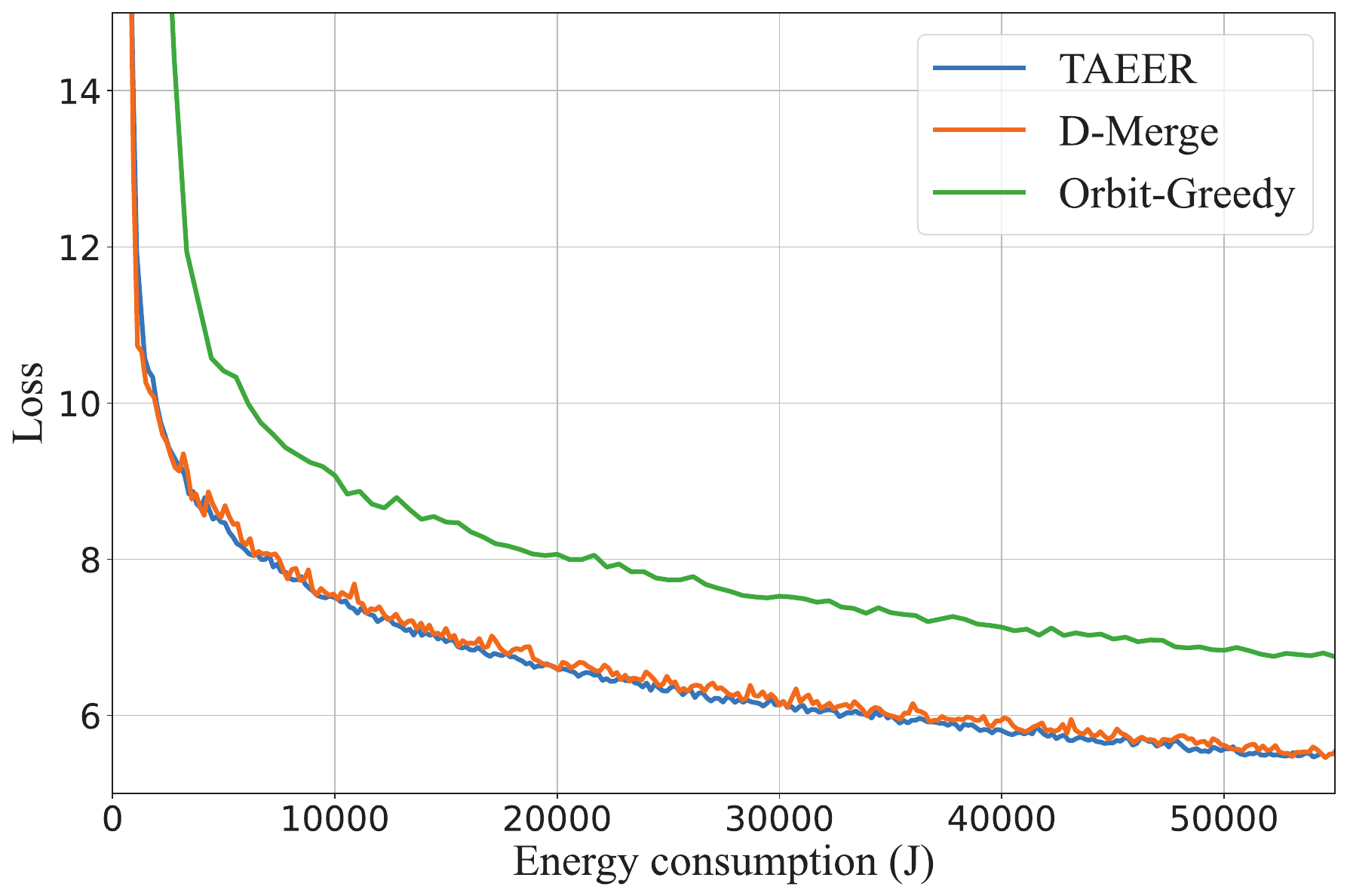}
		\caption{Training loss in a 80/4/1 Walker-Delta constellation.}
		\label{fig: en walker-delta}
	\end{subfigure}
	\hfill
	\begin{subfigure}{0.49\textwidth}
		\centering
		\includegraphics[width=1\linewidth]{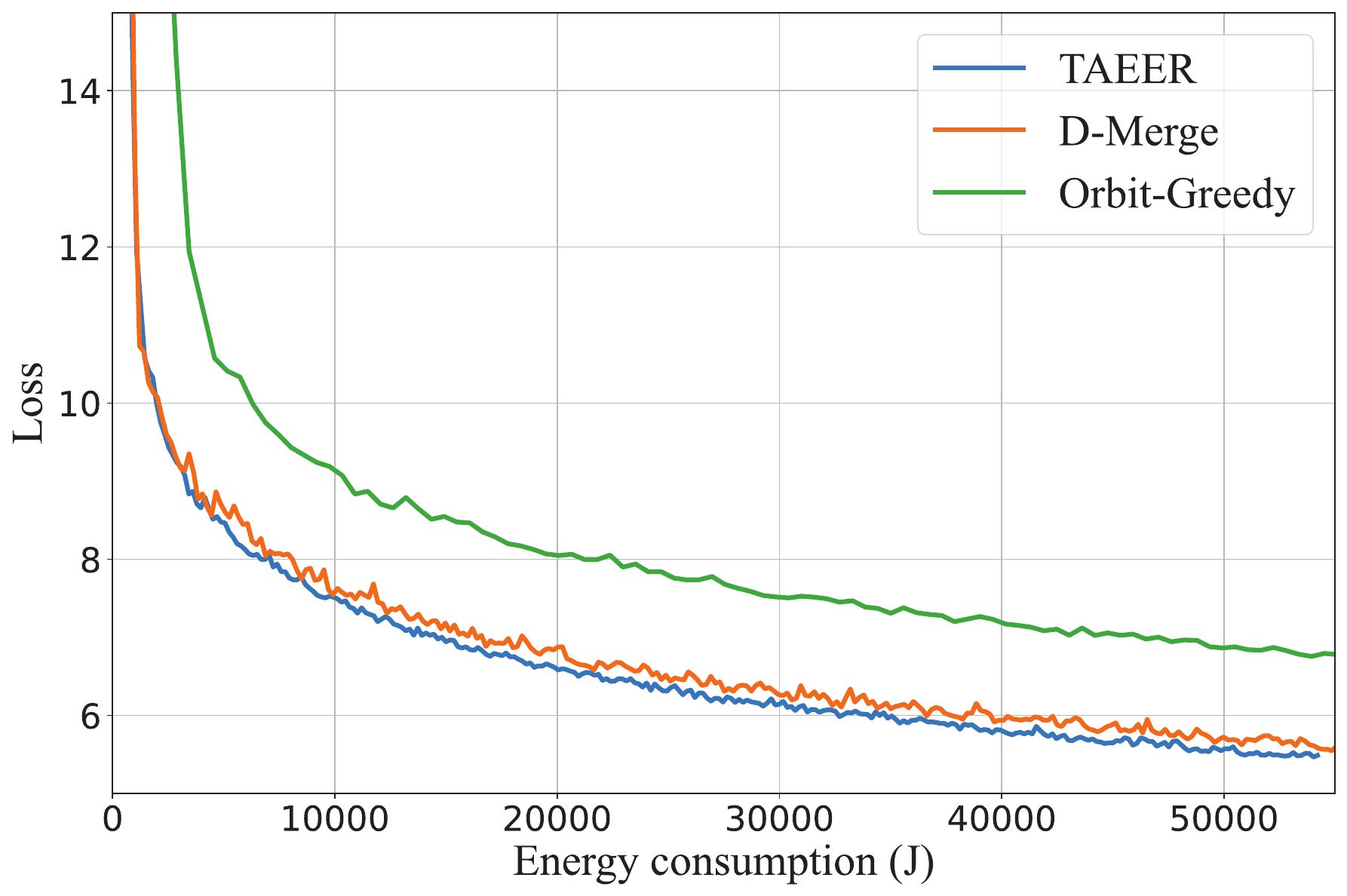}
		\caption{Training loss in a 80/4/1 Walker-Star constellation.}
		\label{fig: en walker-star}
	\end{subfigure}
	\caption{Learning performance versus system energy consumption.}
	\label{fig: energyvsloss}
	\vspace{-0.5cm}
\end{figure}

\begin{table*}[t]
	\centering
	\caption{Simulation results under different schemes and network configurations}
	\footnotesize
	\renewcommand\arraystretch{1.2}
	\begin{tabular}{c|c|ccc|ccc}
		\hline
		&&\multicolumn{3}{c|}{$\rho=1$}&\multicolumn{3}{c}{$\rho=0.1$}\\\hline
		Metric & Constellation & TAEER & D-Merge & Orbit-Greedy & TAEER & D-Merge & Orbit-Greedy\\\hline
		\multirow{4}{*}{Average energy consumption per time slot (J)}
		& 80/4/1 Walker-Delta & 180.68 & 187.50 & 555.73 & 183.90 & 221.41 & 556.38\\	
		& 80/4/1 Walker-Star & 202.38 & 208.42 & 573.90 & 211.49 & 276.25 & 575.79\\
		& 240/8/1 Walker-Star & 151.20 & 153.87 & 552.79 & 151.97 & 161.24 & 552.87\\
		& 800/20/1 Walker-Star&160.21&169.86&542.78&160.51&182.03&542.78\\
		\hline
		\multirow{3}{*}{Average outage probability for each ISL (\%)}	
		& 80/4/1 Walker-Delta & - & - & - & 6.71 & 9.70 & 5.37\\
		& 80/4/1 Walker-Star & - & - & - & 8.97 & 15.3 & 7.50\\
		& 240/8/1 Walker-Star & - & - & - & 1.56 & 2.06 & 1.47\\
		&800/20/1 Walker-Star&-&-&-&0.75&1.07&0.92\\
		\hline
	\end{tabular}
	\label{tab:results}
\end{table*}

\subsubsection{Overall Results Analysis}
Table \ref{tab:results} illustrates the overall simulation results for the TAEER problem. The performance metrics are average energy consumption per time slot (J) and average outage probability for each ISL (\%) after $T=300$ communication rounds.
$\rho=1$ represents the case of error-free ISL and $\rho = 0.1$ represents the case for solving $\mathscr{P}_4$, which takes outage probability into account.
As clearly demonstrated in Table \ref{tab:results}, under the default settings, our proposed TAEER algorithm consistently achieves the lowest energy consumption, followed by the D-Merge method, while the Orbit-Greedy method exhibits the highest energy consumption. With respect to outage probability, Orbit-Greedy yields the lowest value, followed by TAEER, with D-Merge showing the highest. This disparity arises because the Orbit-Greedy method exclusively utilizes stable intra-orbit ISLs, leading to multiple high-energy consumption links transmitting to GEO satellites within the network. In contrast, both TAEER and D-Merge account for unstable inter-orbit ISLs.
By solving the topology-aware \textit{DST} problem, TAEER algorithm and D-Merge achieve better overall performance. For instance, in the simulation system with Walker-Delta constellation, TAEER demonstrates a significant energy reduction, up to 3.64\% and 67.50\% compared with D-Merge and Orbit-Greedy, and a 30.82\% outage probability reduction compared with D-Merge.
We can conclude that the heuristic solution, i.e., TAEER, has the best performance compared to the simple approximation, i.e., D-Merge, and the Orbit-Greedy scheme.

Besides, under default settings, each ISL transmission latency is on the millisecond level, while the length of each time frame is $\Delta\tau = 10$ seconds. Therefore, if an ISL transmission happens to experience an transmission outage within a time frame, it can be quickly retransmitted without affecting the aggregation results. We calculate the time required to solve $\mathscr{P}_2$ with $800/20/1$ configuration, the convergence time of the proposed TAEER algorithm is 0.027 seconds, while that of D-Merge is 0.003 seconds. Both are significantly shorter than $\Delta\tau = 10$ seconds. Therefore, our proposed algorithms can meet the real-time requirements of satellite network applications.
Additionally, the ISL capacity is at the Gbps level, capable of accommodating the transmission of larger-scale models, and it is possible to complete the transmission of an entire model data packet within a single time frame.

From $\rho = 1$ to $\rho = 0.1$ in Table \ref{tab:results}, we can see an increase in the average energy consumption.
This is caused by the extra energy consumed by retransmission protocols in LTP when a transmission outage occurs at an ISL.

\begin{figure}[!]
	\begin{subfigure}{0.49\textwidth}
		\centering
		\includegraphics[width=1\linewidth]{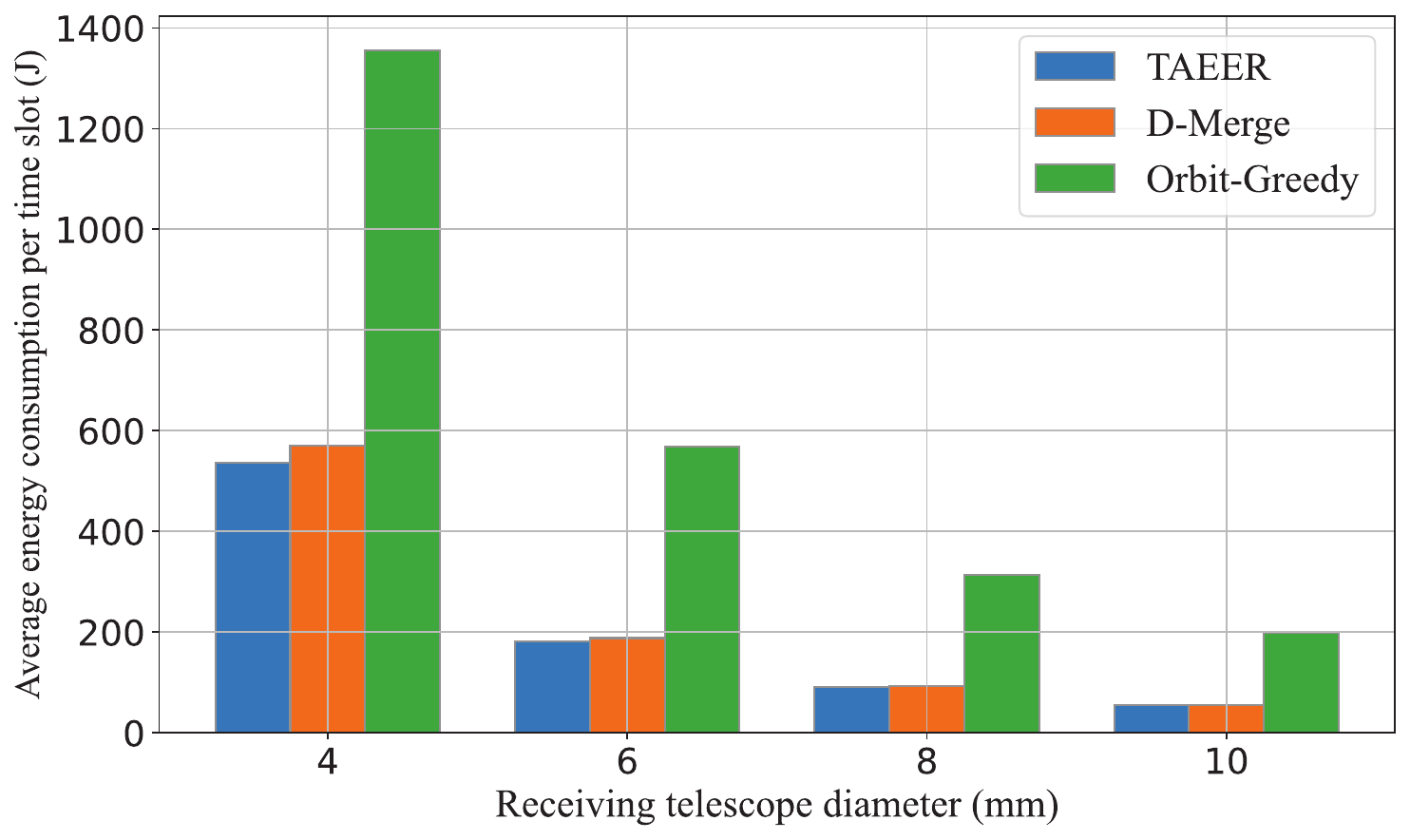}
		\caption{80/4/1 Walker-Delta constellation.}
		\label{fig: ant walker-delta}
	\end{subfigure}
	\hfill
	\begin{subfigure}{0.49\textwidth}
		\centering
		\includegraphics[width=1\linewidth]{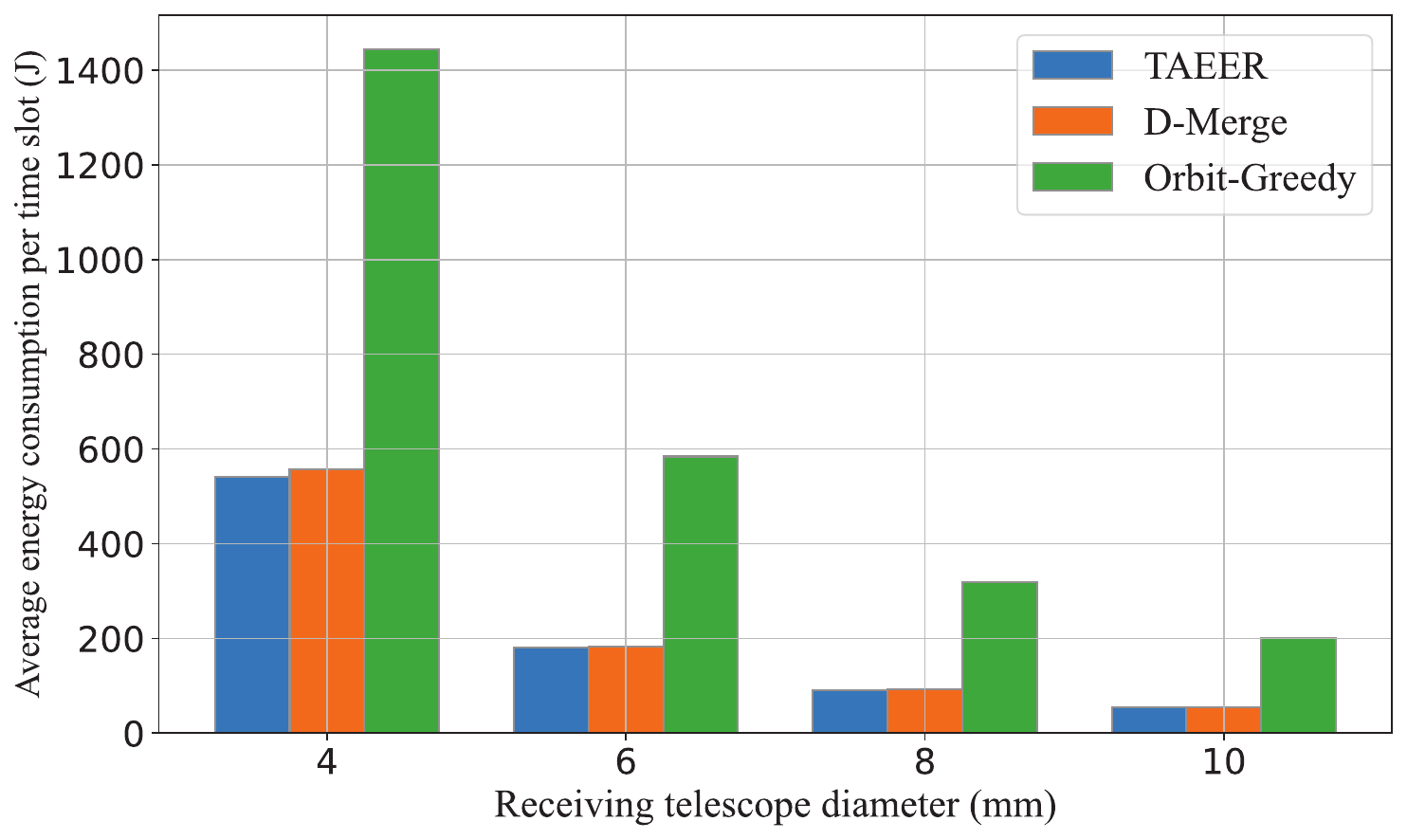}
		\caption{80/4/1 Walker-Star constellation.}
		\label{fig: ant walker-star}
	\end{subfigure}
	\caption{Impact of the receiving telescope diameter on average energy consumption per time slot.}
	\label{fig: ant simulation system}
	\vspace{-0.5cm}
\end{figure}

\subsubsection{Learning Performance versus System Energy Consumption}
The training loss versus energy consumption in different LEO networks of the environmental parameters retrieval task is illustrated in Fig. \ref{fig: energyvsloss}.
To analyze the performance of the proposed TAEER and the D-Merge, we first conclude that D-Merge is a well-established approximation algorithm proposed for solving the \textit{DST} problem, which has the similar performance as the proposed TAEER. While this approach has low complexity, it may lead to suboptimal performance due to insufficient path sharing.
TAEER method constructs subgraphs and fully leverages the benefits of global path sharing and relay nodes to enhance the performance. We shall provide further comparisons between these two schemes in the following subsections.
We can also conclude that the proposed solutions, compared to Orbit-Greedy, which does not consider inter-orbit ISLs, significantly reduce the energy consumption in the LEO satellite network required to achieve the same learning accuracy across two different constellation configurations.

\begin{figure}[!]
	\begin{subfigure}{0.49\textwidth}
		\centering
		\includegraphics[width=1\linewidth]{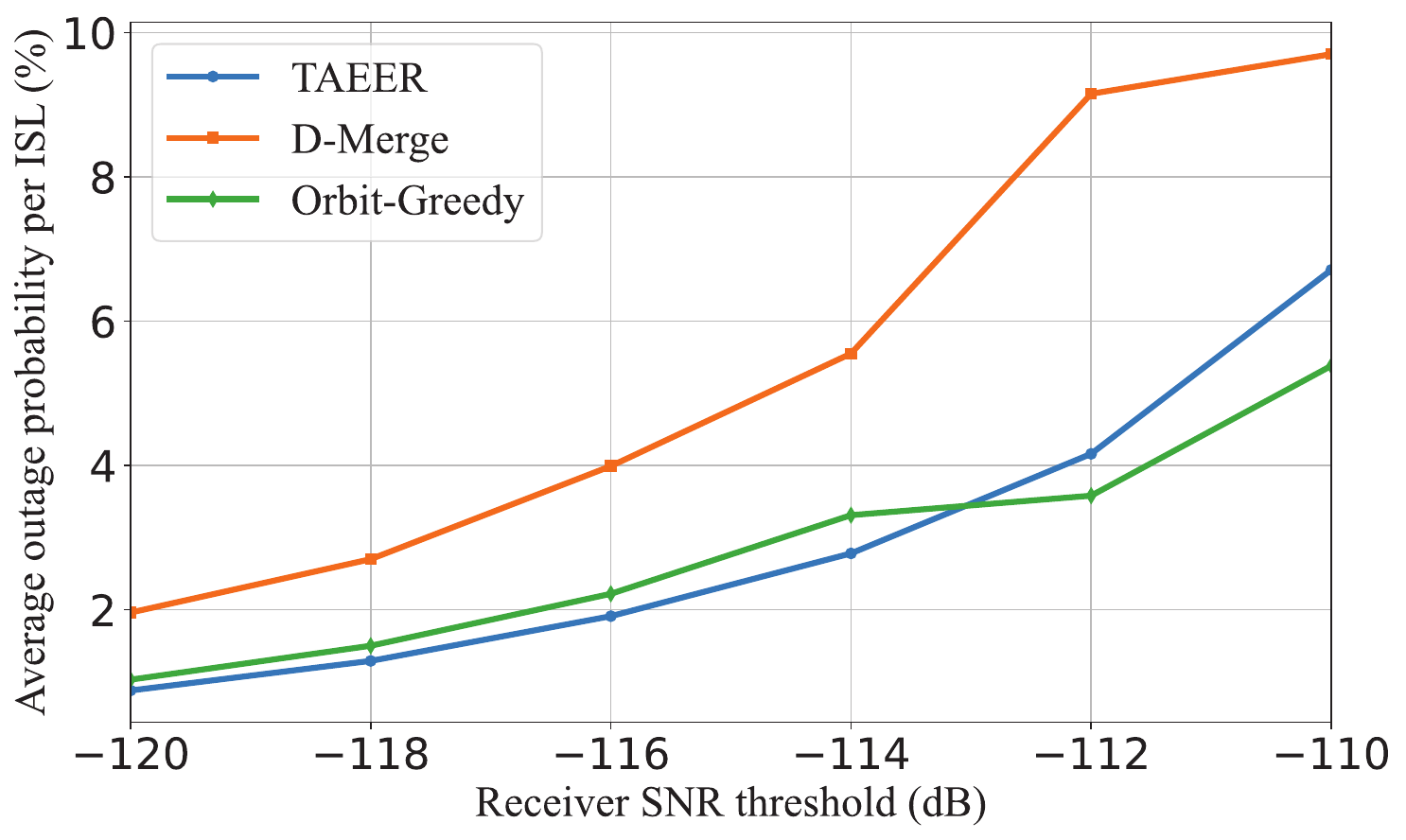}
		\caption{80/4/1 Walker-Delta constellation.}
		\label{fig: SNR walker-delta}
	\end{subfigure}
	\hfill
	\begin{subfigure}{0.49\textwidth}
		\centering
		\includegraphics[width=1\linewidth]{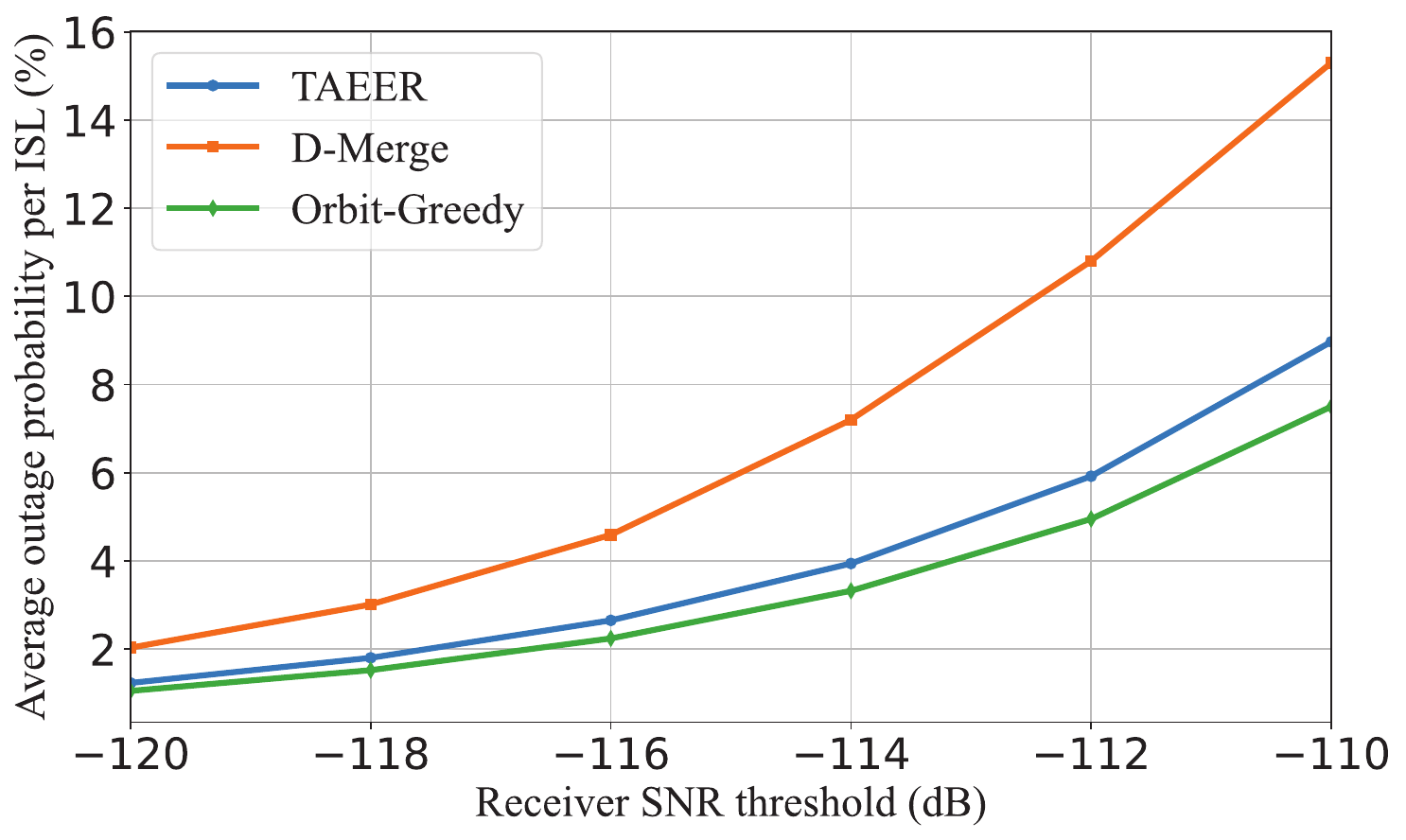}
		\caption{80/4/1 Walker-Star constellation.}
		\label{fig: SNR walker-star}
	\end{subfigure}
	\caption{Impact of the receiver SNR threshold on average outage probability per ISL.}
	\label{fig: SNR simulation system}
	\vspace{-0.5cm}
\end{figure}

\begin{figure*}
	\begin{subfigure}{1\textwidth}
	\centering
	\includegraphics[width=1\linewidth]{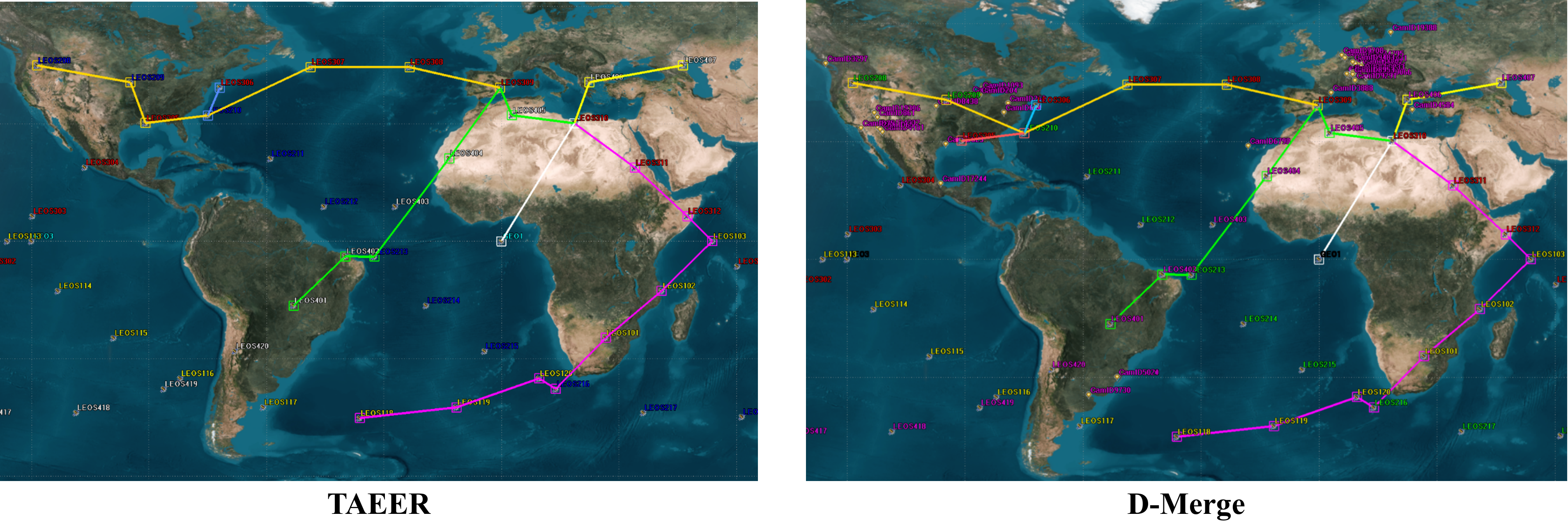}
	\caption{Walker-Delta, time slot 0, snapshot 25.}
	\label{fig: vis delta}
\end{subfigure}
\begin{subfigure}{1\textwidth}
	\centering
	\includegraphics[width=1\linewidth]{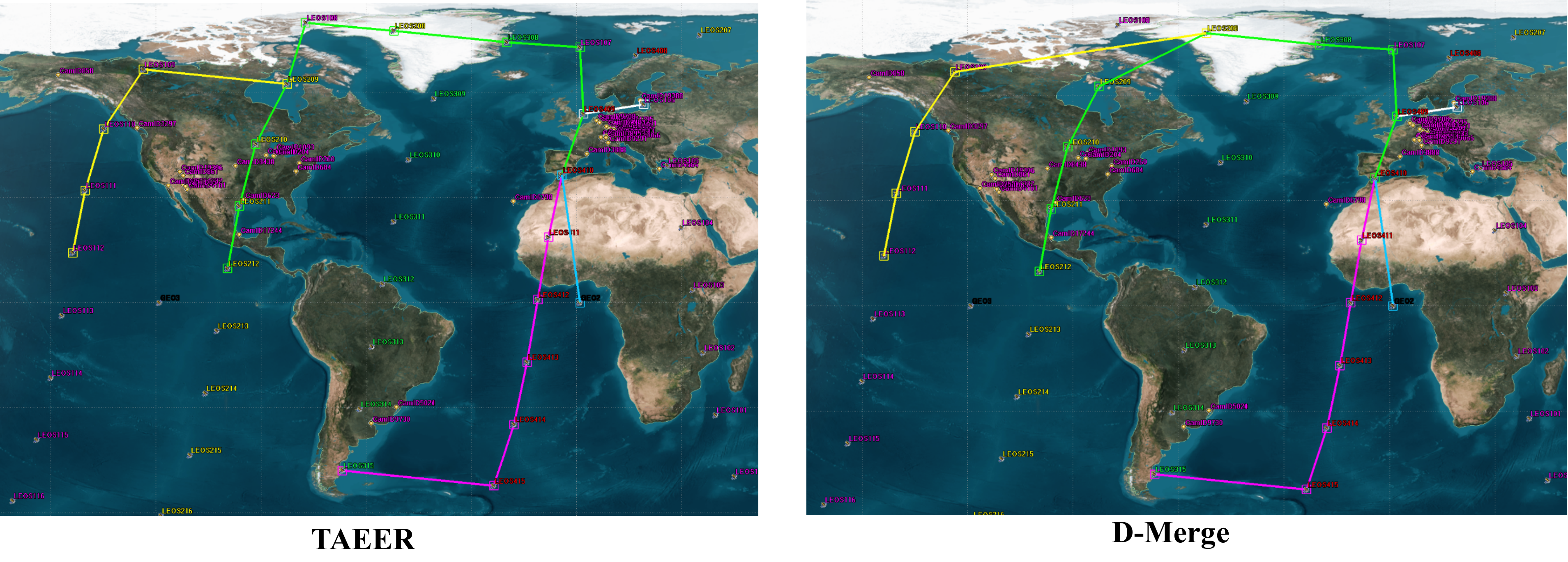}
	\caption{Walker-Star, time slot 18, snapshot 25.}
	\label{fig: vis star}
\end{subfigure}
	\caption{Aggregation paths visualization by solving the \textit{DST} problem in satellite network topology.}
	\label{fig: visualization}
	\vspace{-0.3cm}
\end{figure*}
\subsubsection{Impact of LEO Constellations}
First of all, with the same number of LEO satellites, the system with Walker-Delta constellation exhibits lower average energy consumption and outage probability compared to the Walker-Star constellation due to the closer inter-satellite distances at lower orbital altitudes, resulting in stronger received signal strength.
Secondly, as the number of LEO satellites increases, both the density of satellites within the same orbit and the density of orbital distribution increase. This results in significantly reduced inter-satellite distances within and between orbits, substantially enhancing ISL receiving power. Consequently, the averaged transmission energy required to complete a single global model aggregation is significantly reduced from $80/4/1$ to $240/8/1$ constellation.
We can also observe that as the number of satellites increases further, the averaged transmission energy of the $800/20/1$ constellation is slightly higher compared to the $240/8/1$ constellation due to the more frequent relay switching of ISL hops.
Besides, the outage probability is notably decreased as the total number of satellites increases, which demonstrates the effectiveness of network scaling.
Besides, from Table \ref{tab:results}, we observe that the proposed TAEER outperforms the D-Merge method regardless of whether the LEO satellite network is dense or sparse. Notably, the advantage of TAEER is even more pronounced in sparse networks, highlighting the robustness of the proposed TAEER.

\subsubsection{Impact of the Receiver Telescope Size} 
Specifically, as shown in Fig. \ref{fig: ant simulation system}, we explore the relationship between LEO satellite receiver antenna size and the average energy consumption in each space topology. 
As the size of the antenna telescope increases, the receiver gain $G_R$ of LEO satellites significantly improves, resulting in stronger received signal power. Consequently, less transmission energy is required, which aligns with our energy consumption model \eqref{eq: weight frame}.
We can observe that under varying receiver antenna sizes, the three comparison algorithms maintain the same performance trends as the default settings. 

\subsubsection{Impact of the Receiver SNR Threshold}
We can conclude from \eqref{eq: outage prob} that the receiver SNR threshold plays a key role in the calculation of outage probability caused by pointing error. To evaluate the specific impact of ${\sf SNR}_{\text{th}}$ on the overall system performance, we summarize the simulation results in Fig. \ref{fig: SNR simulation system}.
As demonstrated in Fig. \ref{fig: SNR simulation system}, for both types of LEO satellite constellations, an increase in the threshold results in a substantial rise in the outage probability. This observation is consistent with our physical modeling of the probability distribution for outage.

\subsubsection{Visualization of Resulting Aggregation Paths}
Fig. \ref{fig: visualization} presents some sampling results for the aggregation paths, specifically the topology-aware routing outcomes for Time slot 0, Snapshot 25 under the Walker-Delta constellation, and Time slot 18, Snapshot 25 under the Walker-Star constellation generated by the proposed TAEER and D-Merge algorithm. 
These visualized aggregation paths demonstrate the correctness of our proposed TAEER and D-Merge algorithms.
In addition, the visualization results also show that, under the condition of correctly solving the \textit{DST} problem, the proposed TAEER algorithm is superior to D-Merge. This is because TAEER solves the problem from the perspective of the MSA and the overall topology, whereas D-Merge only merges the shortest paths.

\section{Conclusion}\label{sec: conclusion}
In this paper, we introduced a novel hierarchical learning and computing framework tailored for SGINs, which effectively supports collaborative training on widely distributed terrestrial IoT devices using LEO mega-constellations and GEO satellites.
We leveraged the predictability of satellite network topology by modeling the space network as a directed graph, where edge weights represent energy consumption during transmission. We revealed that minimizing the overall network transmission energy is equivalent to solving a \textit{DST} problem, which is NP-hard.
To address this, we proposed a TAEER algorithm. By first using the Dijkstra algorithm to find the minimum cost path to the root node and then identifying a MSA, we solve the \textit{DST} problem heuristically to create an effective aggregation routing path.
Furthermore, we extended our solution to account for unpredictability such as transmission outages. Simulation results confirmed its superior
performance and robustness compared to existing benchmarks.

\textit{Future Directions:} There are several open directions for extending this work.
First of all, detailed modeling of the energy harvesting and battery systems of the satellites can be considered to formulate a constrained \textit{DST} problem.
Second, this paper considers the space topology as quasi-static, meaning it varies between different time slots but remains constant within the same time slot, while the connection weights are time-varying within a time slot. 
A promising research direction is to model the space network as a time-varying graph for analysis \cite{han2022time}.
We also plan to explore allocating the proportion of transmitted information across time frames, which represents a data allocation problem under dynamic topology. 
Additionally, leveraging the correlation between topologies in adjacent time slots to reduce the complexity of topology construction is another important potential research direction.
Finally, we can explore the use of deep reinforcement learning and other low-complexity methods to solve the \textit{DST} problem for topology-aware routing.

	\apptocmd{\thebibliography}{\setlength{\itemsep}{-1pt}}{}{}	
	\bibliographystyle{IEEEtran}
	\bibliography{refs}

\end{document}